\theoremstyle{definition}
\newtheorem{theorem}{Theorem}
\newtheorem{lemma}[theorem]{Lemma}
\begin{document}

\title{OSLNet: Deep Small-Sample Classification with an Orthogonal Softmax Layer}

\author{Xiaoxu Li, Dongliang Chang, Zhanyu Ma, Zheng-Hua Tan, Jing-Hao Xue, Jie Cao, Jingyi Yu, and Jun Guo%
\thanks{X. Li and J. Cao are with School of Computer and Communication, Lanzhou University of Technology, China.}
\thanks{X. Li, D. Chang, Z. Ma and J. Guo are with the Pattern Recognition and Intelligent System Laboratory, School of Artificial Intelligence, Beijing University of Posts and Telecommunications, Beijing, China.}
\thanks{Z.-H. Tan is with the Department of Electronic Systems, Aalborg University, Denmark.}
\thanks{J.-H. Xue is with the Department of Statistical Science, University College London, U.K.}
\thanks{J. Yu is with the School of Information Science and Technology, ShanghaiTech University, China.}
}

\markboth{}{Li \MakeLowercase{\textit{et al.}}: OSLNet}

\maketitle

\begin{abstract}
A deep neural network of multiple nonlinear layers forms a large function space, which can easily lead to overfitting when it encounters small-sample data. To mitigate overfitting in small-sample classification, learning more discriminative features from small-sample data is becoming a new trend. To this end, this paper aims to find a subspace of neural networks that can facilitate a large decision margin. Specifically, we propose the~\emph{Orthogonal Softmax Layer} (OSL), which makes the weight vectors in the classification layer remain orthogonal during both the training and test processes. The Rademacher complexity of a network using the OSL is only $\frac{1}{K}$, where $K$ is the number of classes, of that of a network using the fully connected classification layer, leading to a tighter generalization error bound. Experimental results demonstrate that the proposed OSL has better performance than the methods used for comparison on four small-sample benchmark datasets, as well as its applicability to large-sample datasets. Codes are available at:~\url{https://github.com/dongliangchang/OSLNet}.
\end{abstract}

\begin{IEEEkeywords}
Deep neural network, Orthogonal softmax layer, Overfitting, Small-sample classification.
\end{IEEEkeywords}

\IEEEpeerreviewmaketitle

\section{Introduction}
In recent years, due to the advances of deep learning, large-scale image classification has achieved massive success~\cite{lecun2015deep,szegedy2015going}. However, in many real-world problems, big data are difficult to obtain~\cite{bruzzone2010domain,dong2018few,snell2017prototypical}. In addition, when human beings learn a concept, millions or billions of data samples are unnecessary~\cite{fei2006one,vinyals2016matching}. Therefore, small-sample classification~\cite{shu2018small,zhang2019few,jiang2018learning} has received much attention recently in the deep learning community~\cite{fu2015transductive,santoro2016one,vinyals2016matching,lu2018embarrassingly,choi2018structured,fu2018zero}. However, a deep neural network usually models a large function space, which results in overfitting and instability problems that are difficult to avoid in small-sample classification~\cite{glorot2010understanding,zhang2016understanding,courty2017optimal}. 

\begin{figure*}[t] 
\begin{center}
\includegraphics[width=7.2in]{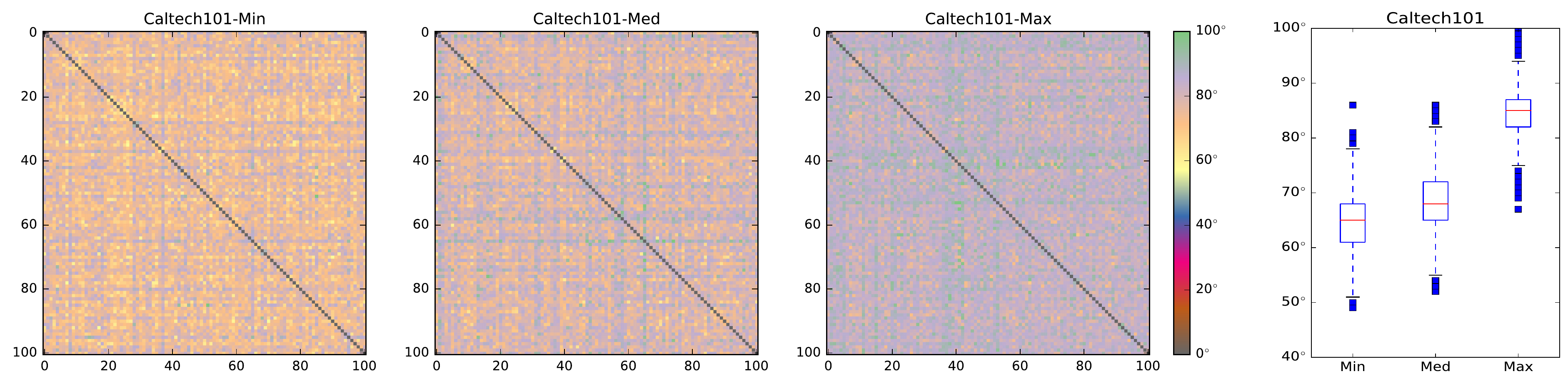}
\end{center}
\caption{The first three matrices show the final angles of the weight vectors from the classification layer in the fully connected network (FC). We ran $60$ rounds of simulations on the Caltech101 dataset, see Section~\ref{sec:Experiments} for more details. The results are selected from the minimum (Caltech101-Min), median (Caltech101-Med) and maximum (Caltech101-Max) of $60$ sets of accuracies. The corresponding accuracies are $ 88.07\% $, $89.28\% $ and $90.35\% $, respectively. The boxplot shows the off-diagonal angels of these three matrices.} \label{angel}
\end{figure*} 

Small-sample classification can be roughly classified into two families depending on whether there are unseen categories to be predicted\cite{chen2019Close,sun2019meta,wei2019piecewise}. In this work, we mainly focus on the one with no unseen category to be predicted. A key challenge in this direction is how to avoid overfitting. Up to now, many methods have been proposed to mitigate overfitting in small-sample classification, such as data augmentation~\cite{kim2017feature,perez2017effectiveness,zhong2018camstyle}, domain adaptation~\cite{bruzzone2010domain,courty2017optimal}, regularization~\cite{srivastava2014dropout}, ensemble methods~\cite{huang2017SnapShot} and learning discriminative features~\cite{liu2016large}. 
 
Recently, learning discriminative features has become a new trend to improve the classification performance in deep learning. Methods such as the large-margin loss and the virtual softmax method~\cite{chen2018virtual} work well on both large-sample and small-sample data. However, these methods either add some constraints on the loss function or make some assumptions on data, which increases the difficulty of optimization and limits the applicable types of data. On the contrary, our goal in this work is to find a subspace of neural networks that can readily obtain a large decision margin and learn highly discriminative features. Specifically, we aim at obtaining a large decision margin through achieving large angles between the weight vectors of different classes in the classification layer (i.e., output layer), in part motivated by the observation that the larger the angles between the weight vectors in the classification layer are, the better the generalization performance is, as shown in Fig.~\ref{angel}. 

Therefore, we propose the \emph{Orthogonal Softmax Layer} (OSL) for neural networks as a replacement of the fully connected classification layer. In the proposed OSL, some connections are removed and the weight vectors of different classes are pairwise orthogonal. Due to fewer connections and larger between-class angles, the OSL can mitigate the co-adaptation~\cite{hinton2012improving} of a network while enhancing the discrimination ability of features, as we will show in this work. Compared with traditional networks with a fully connected classification layer, a neural network with the OSL has significantly lower model complexity and is ideally suitable for small-sample classification. Experimental results demonstrate that the proposed OSL performs better on four small-sample benchmark datasets than the methods used for comparison as well as its applicability to large-sample datasets.

A number of methods have been proposed to maintain the orthogonality of the weight vectors during the training process of networks~\cite{arjovsky2016unitary,huang2018orthogonal,sun2017svdnet,xie2017all,liu2017deep,liu2018learning,mettes2019hyperspherical}. These methods maintain the orthogonality of weight vectors either for reducing gradients vanishing and obtaining a stable feature distribution~\cite{arjovsky2016unitary,huang2018orthogonal,xie2017all} or for generating decorrelated feature representation~\cite{sun2017svdnet,rodriguez2016regularizing}. Unlike these methods, our work does not constrain the optimization process to obtain an orthogonal weight matrix, but constructs a network structure with a fixed orthogonal classification layer by removing some connections so that it can mitigate the co-adaptation between the parameters.
The main contributions of this paper are threefold: 
\begin{enumerate}
\item A novel layer, namely the Orthogonal Softmax Layer (OSL), is proposed. The OSL is an alternative to the fully connected classification layer and can be used as the classification layer (i.e., the output layer) of any neural network for classification.

\item The proposed OSL can reduce the difficulty of network optimization due to militating the co-adaptation between the parameters in the classification layer.

\item A network with the proposed OSL can have a lower generalization error bound than a network with a fully connected classification layer. 
\end{enumerate}

\section{Related Work}
\textbf{Data augmentation.} Data augmentation, which artificially inflates the training set with label preserving transformations, is well-suited to limited training data~\cite{cubuk2019autoaugment,volpi2018generalizing}, such as the methods of deformation~\cite{kulkarni2015deep,ratner2017learning}, generating more training samples~\cite{shrivastava2017learning}, and pseudo-label~\cite{ratner2016data}. However, data augmentation is computationally costly to implement~\cite{perez2017effectiveness}. 

\textbf{Domain adaptation.} The goal of domain adaptation~\cite{wang2018deep,yoon2019tapnet} is to use a model, which is trained in the source domain with a sufficient amount of annotated training data while trained in the target domain with little or no training data~\cite{yosinski2014transferable,elhoseiny2017write,tzeng2017adversarial,rozantsev2018residual}. The simplest approach to domain adaptation is to use the available annotated data in the target domain to fine-tune a convolutional neural network (CNN) pre-trained on the source data, such as ImageNet~\cite{pan2010survey,tzeng2017adversarial,oquab2014learning,long2017deep}, which is a commonly used method for small-sample classification~\cite{ouyang2016factors}. However, as both the initial learning rate and the optimization strategy of neural networks affect the final performance, this kind of methods is difficult to avoid the overfitting problem in small-sample classification. In addition, the knowledge distillation~\cite{hinton2015distilling}, a kind of method for knowledge transfer, compresses the knowledge in an ensemble into a single model. Overall, this type of methods have some limitations: the original domain and the target domain cannot be far away from each other, and overfitting of neural networks on small-sample data remains difficult to avoid. 

\textbf{Learning discriminative features.} There are several recent studies that explicitly encourage the learning of discriminative features and enlarge the decision margin, such as the virtual softmax~\cite{chen2018virtual}, the L-softmax loss~\cite{liu2016large}, the A-softmax loss~\cite{liu2017sphereface}, the GM loss~\cite{wan2018rethinking}, and the center loss~\cite{wen2016discriminative}. The virtual softmax enhances the discrimination ability of learned features by injecting a dynamic virtual negative class into the original softmax, and it indeed encourages the features to be more compact and separable across the classes~\cite{chen2018virtual}. The L-softmax loss and the A-softmax loss are built on the cross-entropy loss. They introduce new classification scores to enlarge the decision margin. However, the two losses increase the difficulty of network optimization. The GM loss assumes that the deep features of sample points follow a Gaussian mixture distribution. It still has some limitations since many real data are not well suitable to be modelled by Gaussian mixture distributions. The center loss~\cite{wen2016discriminative} developed a regularization term for the softmax loss function, that is, features of the samples from the same class must be close in the Euclidean distance. In addition to the  loss functions above, other loss functions either consider the imbalance of data~\cite{lin2017focal} or the noisy labels in the training data~\cite{zhang2018generalized}. The focal loss~\cite{lin2017focal} places different weights on different training samples: the samples that are difficult to be identified will be assigned a large weight. The truncated $L_{q}$ loss~\cite{zhang2018generalized}, a noise-robust loss function, can overcome noisy labels in the training data.
Unlike these studies for improving the loss function, our method obtains discriminative features by constructing a fixed orthogonal classification layer by removing some connections.

\textbf{Ensemble methods.} The ensemble methods~\cite{granitto2005neural,brown2003use,singh2016swapout,laine2016temporal,kumar2017ensemble} have been shown to be effective to address the overfitting problem. The SnapShot ensembling~\cite{huang2017SnapShot} only trains one time and obtains multiple base classifiers for free. This method leverages the nonconvex nature of neural networks and the ability of the stochastic gradient descent (SGD) to converge to or escape from local minima on demand. It finds multiple local minima of loss, saves the weights of the corresponding base networks, and combines the predictions of the corresponding base networks. The temporal ensembling, a parallel work to the SnapShot ensembling, can be trained on a single network. The predictions made on different epochs correspond to an ensemble prediction of multiple subnetworks due to dropout regularization~\cite{laine2016temporal}. Both the SnapShot ensembling and the temporal ensembling work well on small-sample data. However, generally speaking, using an entire ensemble of models to make prediction is cumbersome and computationally costly. 
 
\textbf{Regularization.} The $L_{2}$ regularization method~\cite{neumaier1998solving} is often applied to mitigate overfitting in neural networks. Dropout~\cite{wager2013dropout} mitigates overfitting mainly by randomly dropping some units (with their connections) from the neural network during training, which prevents the units from coadapting too much. DropConnect~\cite{wan2013regularization} is a variant of Dropout, where each connection can be dropped with probability $p$. Both approaches introduce dynamic sparsity within the model. The fully connected layer with DropConnect becomes a sparsely connected layer, where the connections are randomly selected during the training stage. In addition, the orthogonality regularization is also a kind of regularization techniques that are able to stabilize the distribution of activation over layers within CNNs and improve the performance of CNNs~\cite{bansal2018can,rodriguez2016regularizing}.

In addition to these methods, there
exist implicit regularization techniques, such as batch normalization and its variants~\cite{huang2018decorrelated,huang2019iterative}. Batch normalization (BN)~\cite{ioffe2015batch} aims to normalize the distribution of the layer input of neural networks, hence it can reduce the internal covariate shift in the training process. Built on BN, the decorrelated batch normalization (DBN)~\cite{huang2018decorrelated} decorrelates the layer input and improved BN on CNNs. Iterative normalization (IterNorm)~\cite{huang2019iterative} further improves DBN towards efficient whitening and iteratively normalizes the data along the eigenvectors in the training process. Both DBN and IterNorm adjust the distribution of samples so that features of samples are pairwise orthogonal. Unlike them, the proposed OSL forces the weights in the classification layer to be pairwise orthogonal (Please refer to Fig.~\ref{net} for details.) The orthogonality in OSL is to assign the input neurons to different output neurons so that it can mitigate the co-adaptation between the parameters.

\section{The Orthogonal Softmax Layer} 

Dropping some connections in the neural networks are adopted by Dropout and DropConnect, as well as the proposed OSL. To make the proposed OSL easy to understand, we first review Dropout and  DropConnect. 
\begin{figure*}[h]
\begin{center}
  \includegraphics[width=4.3in]{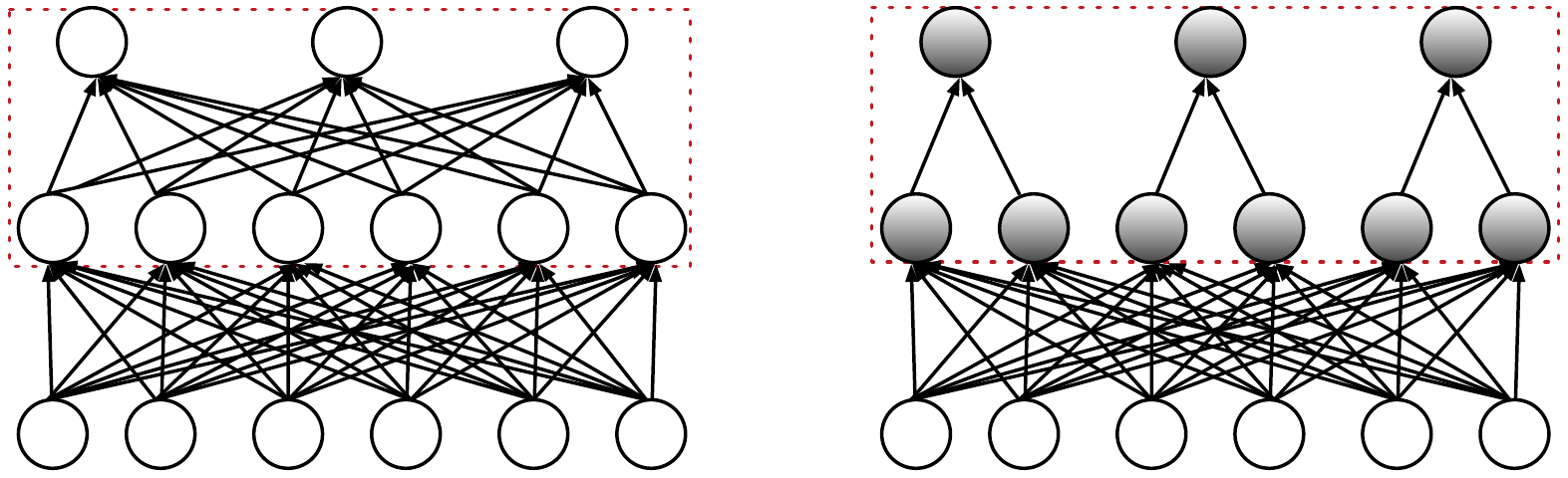}
\end{center}
\caption{The left-hand panel is a standard network with the fully connected classification layer, and the right-hand panel is a network with the Orthogonal Softmax Layer (OSL, indicated by shadow).}\label{net}
\end{figure*}

\subsection{Preliminaries}
We denote the input vector and the output vector of a layer in the neural network as $\mathbf{v} = [v_1,v_2,\ldots,v_D]^\mathrm{T}$ and $\mathbf{r }= [r_1,r_2,\ldots,r_K]^\mathrm{T}$, respectively, and denote the activation function as $a$. Following the formulation in~\cite{wan2013regularization}, the fully connected layer is defined as $\mathbf{r} = a (\mathbf{W} \mathbf{v})$ with weight matrix $\mathbf{W}$.

Based on the above denotations, the fully connected layer with Dropout is defined as $\mathbf{r} = \mathbf{m}\star a \left (\mathbf{W} \mathbf{v}  \right )$, where $\star$ represents element wise product, $\mathbf{m}$ is a $K$-dimension binary vector. The $j$th element of $\mathbf{m}$, $m_j \sim \text{Bernoulli }\left(1-p\right )$, $j\in \{1,2,\ldots,K\}$ and $p$ is probability for dropping a neuron. Moreover, the fully connected layer with DropConnect is defined as $\mathbf{r}= a \left (\left (\mathbf{M} \star \mathbf{W}  \right ) \mathbf{v}\right )$, where $\star$ represents element wise product and each entry of binary matrix $\mathbf{M}$ is $M_{ij}\sim \text{Bernoulli }\left(1-q\right )$, $i\in \{1,2,\ldots,D\}$ and $j\in \{1,2,\ldots,K\}$, and $q$ is the probability for dropping a connection.

\subsection{Mathematical Interpretation of the OSL}
To maintain large angles among the weights in the classification layer, we drop some connections in the fully connected classification layer, make the weights in the classification layer be pairwise orthogonal and propose the \emph{Orthogonal Softmax Layer} (\emph{OSL}, see Fig.~\ref{net}). The OSL is defined as 
\begin{equation}
\mathbf{r} = softmax \left (\left (\mathbf{M} \star  \mathbf{W}  \right )\mathbf{v} \right),
\label{eq:osl}
\end{equation} 
where $\star$ represents element wise product, and $\mathbf{M}$, the mask matrix of $\mathbf{W}$, is a fixed and predesigned block diagonal matrix as
\begin{equation}
\mathbf{M}= \begin{bmatrix}
&M_{1,1} &0_{1,2} &\cdots  &0_{1,K} \\ 
&0_{2,1}& M_{2,2} &\cdots &0_{2,K}\\ 
&\vdots   &\vdots   & \ddots &\vdots  \\ 
&0_{K,1} &0_{K,2} &\cdots   &M_{K,K}
\end{bmatrix},
\label{eq:M}
\end{equation}
where $K$ is the number of classes, $M_{ij}$ is a column vector whose elements are $1$, and $0_{ij}$ is a zero column vector where every element is $0$. The matrix is fixed during the training and test phases. If we consider $\mathbf{M} \star  \mathbf{W}$ as one matrix, the column vectors are pairwise orthogonal, which is equivalent to introducing a strong prior that the angles between the weights of different categories are all $90^\circ$.

\subsection{Remarks for OSL}
Here are some remarks on the OSL. First, the OSL is an alternative to the classification layer (i.e., the last fully connected layer) of a neural network. In contrast, the Dropconnect cannot be effectively used in the classification layer, because if all the connections of a output neuron are dropped, the neural network or part of it cannot be trained.

Second, a neural network with the OSL is a single model throughout the training and test processes, since it first drops some connections and subsequently fixes the structure in both the training and test phases. This is in contrast with the DropConnect method. DropConnect randomly drops connections with a given probability during the training phase, but no connection is dropped in the test phase. DropConnect can be considered an implicit ensemble method. 

Third, in the OSL, different neurons of the last hidden layer connect to different output neurons. This setup assigns each hidden neuron to one and only one specific class that they are responsible prior to the start of training, so the difficulty level of training the neural network is reduced. Moreover, the solution space of a network with the OSL is a subset of the solution space of its corresponding network with a fully connected classification layer, which implies low model complexity of the former.

\subsection{Implementation of the OSLNet} 
The OSL can be used in any type of networks for classification, such as the fully connected network or CNN. For convenience, we call a neural network with the OSL as \emph{\textbf{OSLNet}}. The OSLNet uses the OSL as the classification layer instead of a fully connected classification layer, and the structure of the other layers are kept the same as a standard network, e.g., a neural network with the fully connected classification layer. 

Different from the implementation of the fully connected classification layer, the forward computation of the OSL needs to conduct the dot product (see (\ref{eq:osl})) between the predesigned matrix $\mathbf{M}$ (see (\ref{eq:M})) and weights matrix $\mathbf{W}$.

\subsection{The Generalization Error Bound of the OSLNet}
In this section, we discuss the generalization error bound of the OSLNet in terms of the Rademacher complexity~\cite{mohri2012foundations,bartlett2002rademacher}. We define the entire network into two parts: the classification layer and the layers of extracting features. The classification layer refers to the last fully connected layer in a neural network, and the layers of extracting features refer to all layers except the classification layer. Based on the definitions, a standard network with the fully connected classification layer can be represented as $f \left ( x; \mathbf{W}_{s}, \mathbf{W}_{g} \right )$ and the OSLNet can be represented as $f \left ( x; \mathbf{M} \star \mathbf{W}_{s}, \mathbf{W}_{g} \right )$, where $\mathbf{W}_{s}$ and $\mathbf{M} \star \mathbf{W}_{s}$ are the parameters of the classification layer for the two networks, respectively, and $\mathbf{W}_{g}$ are the parameters of the layers for extracting features.

\begin{lemma} (Network Layer Bound~\cite{wan2013regularization}) Let $\mathcal{G}$ denote the class of $D$-dimensional real functions $\mathcal{G} = \ [ \mathcal{Q}_j\ ]_{j=1}^{D}$, and $\mathcal{H}$ denote a linear transform function $\mathcal{H}: {R}^{D}\rightarrow R$, which is parametrized by $\mathbf{W}$ with ${||\mathbf{W}||}_2 \leq {B}$; then $\hat{R}_{l}\left(\mathcal{H} \circ \mathcal{G} \right)$, the empirical Rademacher complexity of $\left(\mathcal{H} \circ \mathcal{G} \right)$,  satisfies $\hat{R}_{l}\left(\mathcal{H} \circ \mathcal{G} \right) \leq \sqrt{D} B  \hat{R}_{l}\left(\mathcal{Q}\right)$.\label{l1}
 \end{lemma}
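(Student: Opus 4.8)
The plan is to unwind the definition of the empirical Rademacher complexity of $\mathcal{H}\circ\mathcal{G}$ and peel off the linear map $\mathcal{H}$ using the Cauchy--Schwarz inequality, paying one factor of $B$ for the bound ${||\mathbf{W}||}_2\le B$ and one factor of $\sqrt{D}$ for collapsing the $D$ coordinate functions of a $g\in\mathcal{G}$ into a single representative of the common class $\mathcal{Q}$.

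First I would write, for Rademacher variables $\sigma_1,\dots,\sigma_l$ and a fixed sample $x_1,\dots,x_l$, $\hat{R}_{l}(\mathcal{H}\circ\mathcal{G}) = \tfrac{1}{l}\,\mathbb{E}_{\sigma}\bigl[\,\sup_{h,g}\sum_{i}\sigma_i\,(h\circ g)(x_i)\,\bigr]$, where the supremum ranges over $h\in\mathcal{H}$ (equivalently over $\mathbf{W}$ with ${||\mathbf{W}||}_2\le B$) and over $g\in\mathcal{G}$ (equivalently over independent choices of $\mathcal{Q}_1,\dots,\mathcal{Q}_D$ from $\mathcal{Q}$). Since $h$ is linear in its $D$-dimensional input $g(x_i)=[\mathcal{Q}_1(x_i),\dots,\mathcal{Q}_D(x_i)]^{\mathrm T}$, we have $(h\circ g)(x_i)=\sum_{j=1}^{D}W_j\,\mathcal{Q}_j(x_i)$; exchanging the two finite sums turns the bracketed quantity into the inner product $\langle\mathbf{W},\mathbf{u}_{\sigma}\rangle$, where $\mathbf{u}_{\sigma}\in\mathbb{R}^{D}$ has coordinates $(\mathbf{u}_{\sigma})_j=\sum_{i}\sigma_i\,\mathcal{Q}_j(x_i)$.

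Next I would take the supremum in two stages. For fixed $\mathcal{Q}_1,\dots,\mathcal{Q}_D$, Cauchy--Schwarz gives $\sup_{{||\mathbf{W}||}_2\le B}\langle\mathbf{W},\mathbf{u}_{\sigma}\rangle\le B\,{||\mathbf{u}_{\sigma}||}_2$; then the elementary inequality ${||\mathbf{u}_{\sigma}||}_2\le\sqrt{D}\,\max_{1\le j\le D}\bigl|(\mathbf{u}_{\sigma})_j\bigr|$ reduces the vector to one coordinate. Because the $\mathcal{Q}_j$ are chosen independently from the common class $\mathcal{Q}$ while the objective sees only the coordinatewise maximum, the remaining supremum decouples, $\sup_{\mathcal{Q}_1,\dots,\mathcal{Q}_D}\max_{j}\bigl|(\mathbf{u}_{\sigma})_j\bigr|=\sup_{q\in\mathcal{Q}}\bigl|\sum_{i}\sigma_i\,q(x_i)\bigr|$. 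Taking $\mathbb{E}_{\sigma}$ of the resulting pointwise bound and dividing by $l$ yields $\hat{R}_{l}(\mathcal{H}\circ\mathcal{G})\le\sqrt{D}\,B\,\hat{R}_{l}(\mathcal{Q})$, as claimed.

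The step needing the most care is this last decoupling together with the ${||\cdot||}_2\le\sqrt{D}\,{||\cdot||}_\infty$ bound: one must check that the $D$ independent choices of coordinate function genuinely reduce to a single-function Rademacher complexity rather than leaving a coupled quantity, and note that the $\sqrt{D}$ arising here is a deliberately coarse estimate (for particular classes $\mathcal{Q}$ a contraction-type argument could be sharper, but the stated bound needs only this crude inequality). A minor bookkeeping point: the argument is unaffected by whether $\hat{R}_l$ is defined with or without an absolute value around $\sum_{i}\sigma_i(\cdot)$, since both Cauchy--Schwarz and the coordinate bound respect signs.
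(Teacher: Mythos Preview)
The paper does not prove this lemma; it is quoted as a known result from \cite{wan2013regularization} (the DropConnect paper) and used as a black box in the proof of the subsequent theorem on the complexity of the OSLNet. There is therefore no ``paper's own proof'' to compare against.

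That said, your argument is a correct and standard proof of this kind of layer-peeling bound. The three moves---Cauchy--Schwarz to extract the factor $B$ from $\sup_{\|\mathbf{W}\|_2\le B}\langle\mathbf{W},\mathbf{u}_\sigma\rangle$, the coarse norm inequality $\|\mathbf{u}_\sigma\|_2\le\sqrt{D}\,\|\mathbf{u}_\sigma\|_\infty$ to produce the $\sqrt{D}$, and the decoupling observation that an independent choice of $\mathcal{Q}_1,\dots,\mathcal{Q}_D$ followed by a coordinatewise maximum reduces to a single supremum over $\mathcal{Q}$---are exactly the intended route and match the proof in the cited reference. Your closing caveat about the absolute-value convention in the definition of $\hat{R}_l$ is appropriate: if one works with the unsigned definition, the decoupling step lands on $\sup_{q\in\mathcal{Q}}\bigl|\sum_i\sigma_i q(x_i)\bigr|$, which matches $\hat{R}_l(\mathcal{Q})$ only under the signed convention or (up to a constant) when $\mathcal{Q}$ is closed under negation; the lemma as stated is typically read under a convention that absorbs this.
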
 
 
\begin{theorem} (Complexity of \emph{\textbf{OSLNet}}). Following the notation in Lemma~\ref{l1}, we further let $ \hat{R}_{l}\left(\mathcal{O}\right)$ denote the empirical Rademacher complexity of an OSLNet.
If the weights of the OSL meet $ |\mathbf{W}_s| \leq B_{s}$, we will have 
$ \hat{R}_{l}\left(\mathcal{O}\right) \leq \big ( \frac{D}{\sqrt{k}} B_{s} \big ) \hat{R}_{l}\left(\mathcal{Q}\right)$.

\end{theorem}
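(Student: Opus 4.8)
The plan is to treat the OSL as an instance of the linear-transform layer $\mathcal{H}$ appearing in Lemma~\ref{l1}, and to exploit the block-diagonal structure of its effective weight matrix $\mathbf{M}\star\mathbf{W}_{s}$ so that its operator norm is much smaller than what the entrywise bound $|\mathbf{W}_{s}|\le B_{s}$ alone would permit. Concretely, I would write the OSLNet function class as $\mathcal{O}=\mathcal{H}\circ\mathcal{G}$, where $\mathcal{G}=[\mathcal{Q}_{j}]_{j=1}^{D}$ is the $D$-dimensional class of outputs of the feature-extracting layers parametrised by $\mathbf{W}_{g}$, and $\mathcal{H}$ is the linear map with weight matrix $\mathbf{M}\star\mathbf{W}_{s}$. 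Everything then reduces to bounding $\|\mathbf{M}\star\mathbf{W}_{s}\|_{2}$ and substituting it for $B$ in Lemma~\ref{l1}.

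The steps, in order, are as follows. \textbf{(i)} By the definition of $\mathbf{M}$ in (\ref{eq:M}), the $D$ coordinates of the last hidden layer split into $K$ disjoint groups of size $D/K$, and the weight vector $\mathbf{w}_{j}$ that class $j$ uses is supported only on the $j$-th group; hence $\mathbf{w}_{j}$ has at most $D/K$ nonzero entries. \textbf{(ii)} Since $|\mathbf{W}_{s}|\le B_{s}$ holds entrywise, $\|\mathbf{w}_{j}\|_{2}\le\sqrt{D/K}\,B_{s}$ for every $j$. \textbf{(iii)} Because $\mathbf{M}\star\mathbf{W}_{s}$ is block diagonal with the $\mathbf{w}_{j}$'s acting on disjoint coordinate groups, its spectral norm equals the largest block norm, i.e. $\|\mathbf{M}\star\mathbf{W}_{s}\|_{2}=\max_{j}\|\mathbf{w}_{j}\|_{2}\le\sqrt{D/K}\,B_{s}$. \textbf{(iv)} Apply Lemma~\ref{l1} to $\mathcal{O}=\mathcal{H}\circ\mathcal{G}$ with the same $D$-dimensional $\mathcal{G}$ and with $B=\sqrt{D/K}\,B_{s}$, obtaining $\hat{R}_{l}(\mathcal{O})\le\sqrt{D}\cdot\sqrt{D/K}\,B_{s}\,\hat{R}_{l}(\mathcal{Q})=\frac{D}{\sqrt{K}}B_{s}\,\hat{R}_{l}(\mathcal{Q})$, which is exactly the claim. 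As a consistency check with the abstract: for the fully connected classification layer the same entrywise bound only gives $\|\mathbf{W}_{s}\|_{2}\le\|\mathbf{W}_{s}\|_{F}\le\sqrt{KD}\,B_{s}$, so Lemma~\ref{l1} yields $\sqrt{D}\sqrt{KD}\,B_{s}=D\sqrt{K}\,B_{s}$, which is $K$ times the OSLNet bound.

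I expect the main obstacle to be steps~(iii)--(iv): one must argue carefully that the quantity controlling Lemma~\ref{l1} is the \emph{operator} norm of the full block-diagonal matrix — which collapses to a single block norm precisely because $\mathbf{M}$ forces disjoint supports — rather than its Frobenius norm, and one must reconcile the scalar-output form of Lemma~\ref{l1} with the fact that the OSL emits $K$ scores. An alternative route is to apply Lemma~\ref{l1} blockwise, treating class $j$ as a linear map of only the $D/K$-dimensional sub-family $\mathcal{G}_{j}$, which gives $\hat{R}_{l}\le (D/K)B_{s}\,\hat{R}_{l}(\mathcal{Q})$ per class; the care there is in recombining the $K$ classes without reintroducing a factor that worsens the stated bound. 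A minor technical point is the implicit assumption that $K$ divides $D$; otherwise the groups have size $\lceil D/K\rceil$ and only the constants change.
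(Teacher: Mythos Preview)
Your proposal is correct and follows essentially the same route as the paper: decompose $\mathcal{O}=\mathcal{H}\circ\mathcal{G}$ and apply Lemma~\ref{l1} with a norm bound on $\mathbf{M}\star\mathbf{W}_{s}$ extracted from its block-diagonal structure together with the entrywise constraint $|\mathbf{W}_{s}|\le B_{s}$. The only difference is bookkeeping in how the factor $D/\sqrt{K}$ is assembled---the paper reduces the input dimension of $\mathcal{H}$ to $D/K$ and pairs it with $\|\mathbf{M}\star\mathbf{W}_{s}\|_{2}\le\sqrt{D}\,B_{s}$, whereas you keep the full dimension $D$ and use the sharper spectral bound $\sqrt{D/K}\,B_{s}$ from your step~(iii)---but both produce the same product $\sqrt{D}\cdot\sqrt{D/K}\,B_{s}=D/\sqrt{K}\,B_{s}$.
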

\begin{proof}
First, we denote the empirical Rademacher complexity of a standard network with the fully connected classification layer as $\hat{R}_{l}\left(\mathcal{O}_{0}\right) = \hat{R}_{l}\left( \mathcal{H}_{0} \circ \mathcal{G} \right)$. Since no connection has been removed in a standard network with the fully connected classification layer, $\mathcal{H}_{0}$ is a linear transform function $\mathcal{H}_{0}: {R}^{D}\rightarrow R$ and is parametrized by $\mathbf{W}_s$ with $|\mathbf{W}_s| \leq B_{s}$. Because $|\mathbf{W}_s| \leq B_{s}$ and the size of $\mathbf{W}_s$ is $D \times K$, we have $||\mathbf{W}_s||_2 \leq \sqrt{DK}{B}_s$. Thus, based on Lemma 1, we can obtain $ \hat{R}_{l}\left(\mathcal{O}_{0}\right) \leq \big ( D\sqrt{K} B_{s} \big ) \hat{R}_{l}\left(\mathcal{Q}\right)$. 

Similarly, for OSLNet, we have $\hat{R}_{l}\left(\mathcal{O}\right) = \hat{R}_{l}\left( \mathcal{H} \circ \mathcal{G} \right)$. Because in the OSLNet some connections are removed and the weight vectors of different classes are pairwise orthogonal, $\mathcal{H}$ becomes a linear transform function $\mathcal{H}$: $R^{\frac{D}{K}}\rightarrow R$ and  is parametrized by $\mathbf{M} \star \mathbf{W}_{s}$ ($||\mathbf{M} \star \mathbf{W}_{s}||_2 \leq \sqrt{D}{B}_s$). Therefore, based on Lemma 1, we have $ \hat{R}_{l}\left(\mathcal{O}\right) \leq \big ( \frac{D}{\sqrt{K}} B_{s} \big ) \hat{R}_{l}\left(\mathcal{Q}\right)$.
\end{proof}

The analysis above shows that the bound of empirical Rademacher complexity for the OSLNet is only $\frac{1}{K}$ of that for a standard network. In addition, the empirical error of the OSLNet is close to the standard model in terms of accuracy and cross-entropy loss on the training data, as shown in Fig.~\ref{loss}. Therefore, according to the relationship between a model generalization error bound and an empirical Rademacher complexity (Theorem 3.1 in~\cite{mohri2012foundations}), the OSLNet has a lower model generalization error bound than the standard network.

\section{Experimental Evaluation} \label{sec:Experiments}
To evaluate the performance of the proposed OSL, we compare OSLNet with other methods on four small-sample datasets and three large-sample datasets. These evaluation serve four purposes: 
\begin{enumerate}
\item To compare the proposed OSL with state-of-the-art methods on small-sample image classification (Sec.~\ref{Sec:sota}, Sec.~\ref{sec:Different Training Sizes}, and Sec.~\ref{sec:Paired Student's t-Test});
\item To investigate the effect of modifying the width or depth of the hidden layers on OSLNet (Sec.~\ref{sec:Effect on the Width} and Sec.~\ref{sec:Effect on the Depth}), and the effect of changing the feature extractor on OSLNet  (Sec.~\ref{sec:changing feature extractor});
\item To demonstrate the discriminability of the features learned from OSLNet (Sec.~\ref{sec:Features Visualization});
\item To demonstrate the performance of OSL on large-sample image classification (Sec.~\ref{sec:OSLNet on Large-sample Datasets}). 
\end{enumerate}

\subsection{Small-Sample Datasets}\label{sec:datasets}

\begin{table*}[h]
\caption{Comparison of the classification performance on the \emph{UIUC-Sports} (UIUC), \emph{15Scenes}, \emph{a subset of 80-AI} (80-AI), and \emph{Caltech101} datasets. The methods are: \emph{Fully connected network} (FC), \emph{Focal loss} (Focal), \emph{Center loss} (Center), \emph{Truncated $L_{q}$ loss} (T-$L_{q}$), \emph{Iterative Normalization} (IterNorm), \emph{Decorrelated Batch Normalization} (DBN), \emph{Dropout}, \emph{Large-Margin Softmax Loss} (Lsoftmax), \emph{SnapShot Ensembling} (SnapShot), and \emph{the proposed OSL} (OS), and \emph{SnapShot Ensembling of OS} (OS-SnapShot). Each method has been evaluated for $60$ rounds.} \label{A-10-A} 
\begin{center}

\begin{tabular}{c|c|ccccccccccc}\hline 
\bf {Datasets}  & \bf{Measure} &\bf {FC}  &\bf{Focal}  & \bf{Center} & \bf{T-$L_{q}$} & \bf{IterNorm} & \bf{DBN}  &\bf {Dropout} &\bf {Lsoftmax} & \bf {SnapShot}  & \bf {OS} &\bf{OS-SnapShot} 
\\ \hline
UIUC  &Mean                   &0.8837 & 0.8787 & 0.8347 & 0.8573 & 0.8506 &0.8378  &0.8889  &0.8946   &0.8950 &  0.9016  &\bf{0.9041}   \\
       &Std. &0.0151  & 0.0135 &0.0283  & 0.1693 & 0.0242 &  0.0752  &0.0113  &0.0076 & 0.0175 & 0.0055 &\bf{0.0030}  \\
\hline
15Scenes &Mean          &0.8331 &0.8285  & 0.7911 & 0.6551 & 0.8291 & 0.8005 &0.8321  &0.8434  & 0.8413  &0.8439  &\bf{0.8464}  \\
    & Std. &0.0080 & 0.0066 & 0.0152 &  0.3319 & 0.0067 & 0.0358  &0.0101  &0.0054 &0.0066  &0.0037 &\bf{0.0022} \\
    \hline
80-AI   &Mean      &0.5316 & 0.5291 & 0.4678 &-&0.5828  &0.4552  &0.5445  &0.3886   & 0.5825  &0.6157  &\bf{0.6192} \\
    & Std. &0.0139 & 0.0159 & 0.0356 & - & 0.0074 &0.0291   &0.0305  &0.0413 &0.0091  &0.0031  &\bf{0.0025}  \\
\hline
Caltech101    &Mean   &0.8927 &0.8881  &0.8644  & - & 0.8814 & 0.9254 &0.8865  &0.9168  & 0.9290  &0.9127  &\bf{0.9369 }\\
                     & Std. &0.0046 & 0.0047 & 0.0062 &-  & 0.0068 &  0.0028 &0.0077 &0.0071 &0.0020   &0.0044  &\bf{0.0011} \\
\hline
\end{tabular}
\end{center}
\end{table*}

\begin{figure*}[h]
\begin{center}
\includegraphics[width=7.0in]{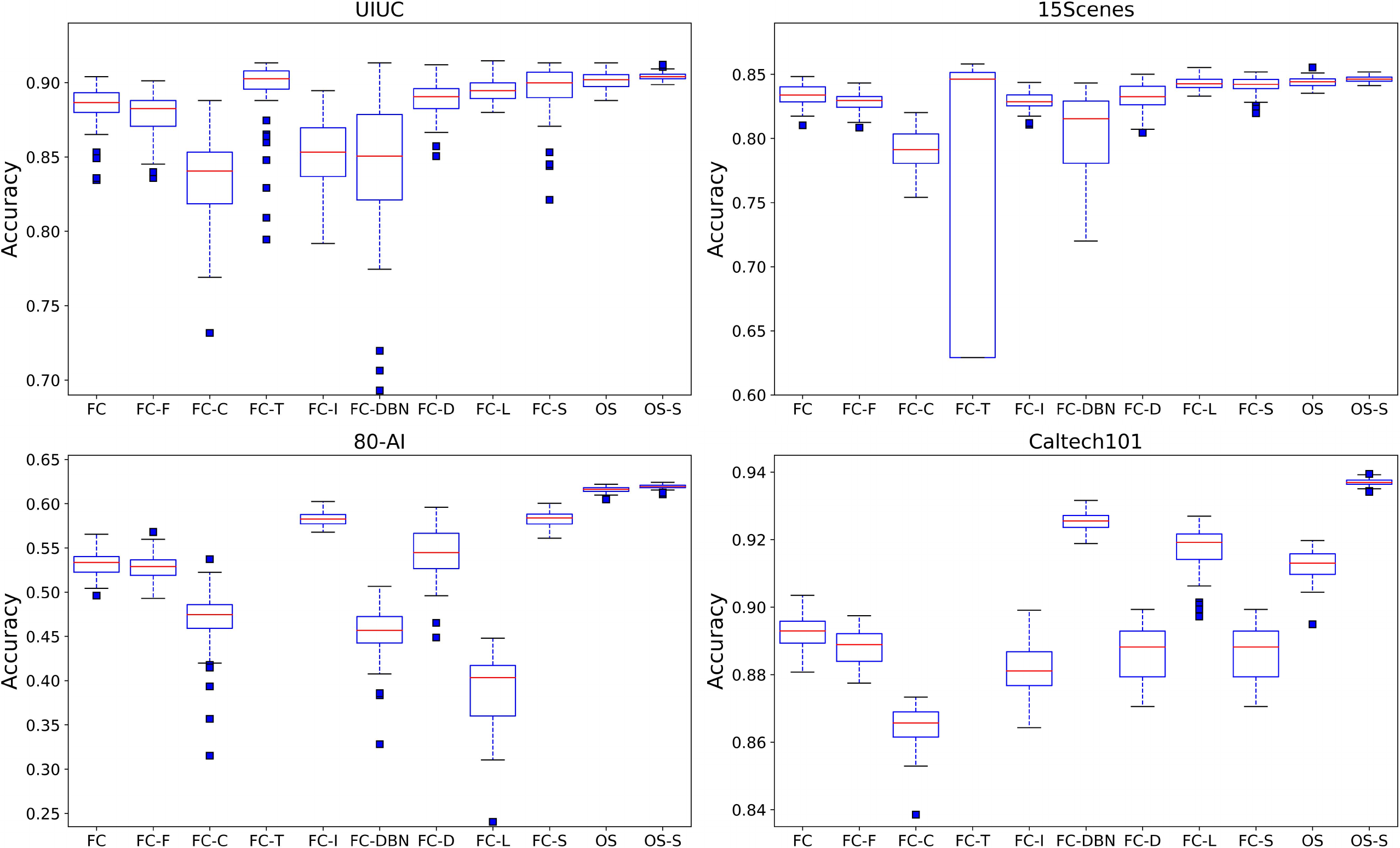}
\end{center}
\caption{Comparison of the accuracies obtained by FC, Focal (FC-F), Center (FC-C), T-$L_{q}$ (FC-T), IterNorm (FC-I), DBN (FC-DBN), Dropout (FC-D), Lsoftmax (FC-L), SnapShot (FC-S), OS, and OS-SnapShot (OS-S), via boxplots on the UIUC, 15Scenes, 80-AI, and Caltech101 datasets. The central mark is the median, and the edges of the box are the $25$th and $75$th percentiles. The outliers are individually marked. In the boxplots, each method has been evaluated for $60$ rounds.}\label{60times}
\end{figure*}

For experiments on small-sample image classification, we selected the following four small-sample datasets:
\begin{itemize}
\item UIUC-Sports dataset (UIUC)\footnote{http://vision.stanford.edu/lijiali/Resources.html}~\cite{li2007and}: This dataset contains $8$ classes of sports scene images. The total number of images is $1579$. The numbers of images for different classes are: bocce ($137$), polo ($182$), rowing ($250$), sailing ($190$), snowboarding ($190$), rock climbing ($194$), croquet ($236$), and badminton ($200$).

\item 15Scenes~\cite{lazebnik2006beyond}: This dataset contains $15$ classes of natural scene images: coast, forest, highway, inside city, mountain, open country, street and tall building. We randomly select $200$ images for each class, so the total number of images is $3000$. 

\item Subset of the Scenes dataset on AI Challenger (80-AI):  This dataset contains $80000$ images in $80$ classes of daily scenes, such as airport terminal and amusement park. The size of the classes is $600$-$1000$\ \footnote{https://challenger.ai/competition/scene/subject}. We randomly select $200$ images for each class, so the total number of images is $16000$. 

\item Caltech101~\cite{fei2004learning}: This dataset contains pictures of objects in 101 categories, and the size of each category is approximately 40-800 images. The total number of pictures in the dataset is $4000$. 
\end{itemize}

\begin{table*}[h]
\caption{Comparison of classification accuracies on the UIUC, 15Scenes, 80-AI and Caltech101 datasets when the training data are reduced. The notation DatasetName$-n$ denotes the configuration where the training data in the named dataset is reduced by $n$ data points for every class from the original training sets, whereas the test sets remain unchanged. Each method runs $60$ rounds on each dataset.} \label{reducing}
\begin{center}
\begin{tabular}{c|c|cccccc}\hline
  \bf {Datasets}   & \bf{Measure}  &\bf {FC}  &\bf {Dropout} &\bf {Lsoftmax} & \bf {SnapShot} & \bf {OS} &\bf{OS-SnapShot} 
\\ \hline
 UIUC-20       & Mean     &0.8715  &0.8776  &0.8800      &0.8822  &0.8891 &0.8894  \\
                      & Std.    &0.0134 &0.0146  &0.0082      &0.0129 &0.0074 &0.0080\\
\hline
UIUC-30     &Mean    &0.8577  &0.8631 &0.8682      &0.8682  &0.8793 &0.8692 \\
                    &Std.   &0.0135  &0.0145  &0.0087     &0.0142  &0.0060 &0.0118 \\
\hline
UIUC-40        &Mean     &0.8447  &0.8493  &0.8563      &0.8585  &0.8695 &0.8570 \\
                    &Std.   &0.0152  &0.0141  &0.0086      &0.0127 &0.0082 &0.0145 \\
\hline
UIUC-50         &Mean  &0.8296  &0.8349  &0.8372     &0.8353  &0.8461 &0.8329  \\
                      &Std.   &0.0115 &0.0139  &0.0092      &0.0174  &0.0065 &0.0223 \\
\hline 
\hline
    \bf {Datasets}         & \bf{Measure}   &\bf {FC}  &\bf {Dropout} &\bf {Lsoftmax} & \bf {SnapShot} & \bf {OS} &\bf{OS-SnapShot} 
\\ \hline
15Scenes-20          &Mean             &0.8283       & 0.8209    &0.8399    &0.8371  &0.8401   &0.8438   \\
               &Std.   &0.0067       &0.0174      &0.0070    &0.0077    &0.0039   &0.0022   \\
\hline
 15Scenes-40           &Mean                &0.8181  &0.8112      &0.8312    &0.8299   &0.8296     & 0.8347    \\
 &Std.      &0.0091  &0.0126      &0.0066    & 0.0067   &0.0039      &0.0026    \\

\hline
 15Scenes-60            &Mean                  &0.804  &0.7965  &0.8156   &0.8101    &0.8129   &0.8164    \\
         &Std.      &0.0090 &0.0120  &0.0077     &0.0066    &0.0042  &0.0028    \\
\hline
15Scenes-80           &Mean                    &0.7781   &0.7746   &0.7936  &0.7837 &0.7906 &0.7979 \\
                    &Std.         &0.0084   &0.0220 &0.0070   &0.0078 &0.0049 &0.0066  \\
\hline
\hline 
     \bf {Datasets}        & \bf{Measure}           &\bf {FC}  &\bf {Dropout} &\bf {Lsoftmax} & \bf {SnapShot} & \bf {OS} &\bf{OS-SnapShot} 
\\ \hline
80-AI-20          &Mean             &0.5169      & 0.5241     &0.4116   &0.5693   &0.5999  & 0.6023\\
               &Std.   &0.0141       &0.0401      &0.0217    &0.0076    &0.0027   &0.0040  \\
\hline
80-AI-40           &Mean                &0.4977 &0.5019      &0.4016    &0.5475    &0.5772      & 0.5680 \\
 &Std.      &0.0202  &0.0282      &0.0240    & 0.0080    &0.0031      &0.0118  \\

\hline
80-AI-60            &Mean                  &0.4766  &0.4605  &0.4022   &0.5099    &0.5386   &0.5089     \\
         &Std.       &0.0121  &0.0252  &0.0412     &0.0086    &0.0046  &0.0219   \\
\hline
80-AI-80           &Mean         &0.4112   &0.4018   &0.4043  &0.4247 &0.4624 &0.3696\\
                    &Std.         &0.0134   &0.0210  &0.0168   &0.0090 &0.0118 &0.0513\\
\hline
\hline 
   \bf {Datasets}    & \bf{Measure}           &\bf {FC}  &\bf {Dropout} &\bf {Lsoftmax} & \bf {SnapShot} & \bf {OS} &\bf{OS-SnapShot} 
\\ \hline
Caltech101-5    &Mean      & 0.8774  &  0.8760    & 0.9057   &  0.9163  & 0.9013  & 0.9261  \\
                     & Std.       &0.0048   &  0.0088  & 0.0059   &0.0028    &0.0050    &   0.0013  \\
\hline
Caltech101-10    &Mean      &0.8583   & 0.8579     &  0.8856  & 0.8976   & 0.8867  & 0.9037  \\
                     & Std.       &0.0047   & 0.0079   &0.0071    &  0.0035   & 0.0056   &  0.0015   \\
\hline
\end{tabular}
\end{center} 
\end{table*}

\begin{figure*}[h]
\begin{center}
\includegraphics[width=6.8in]{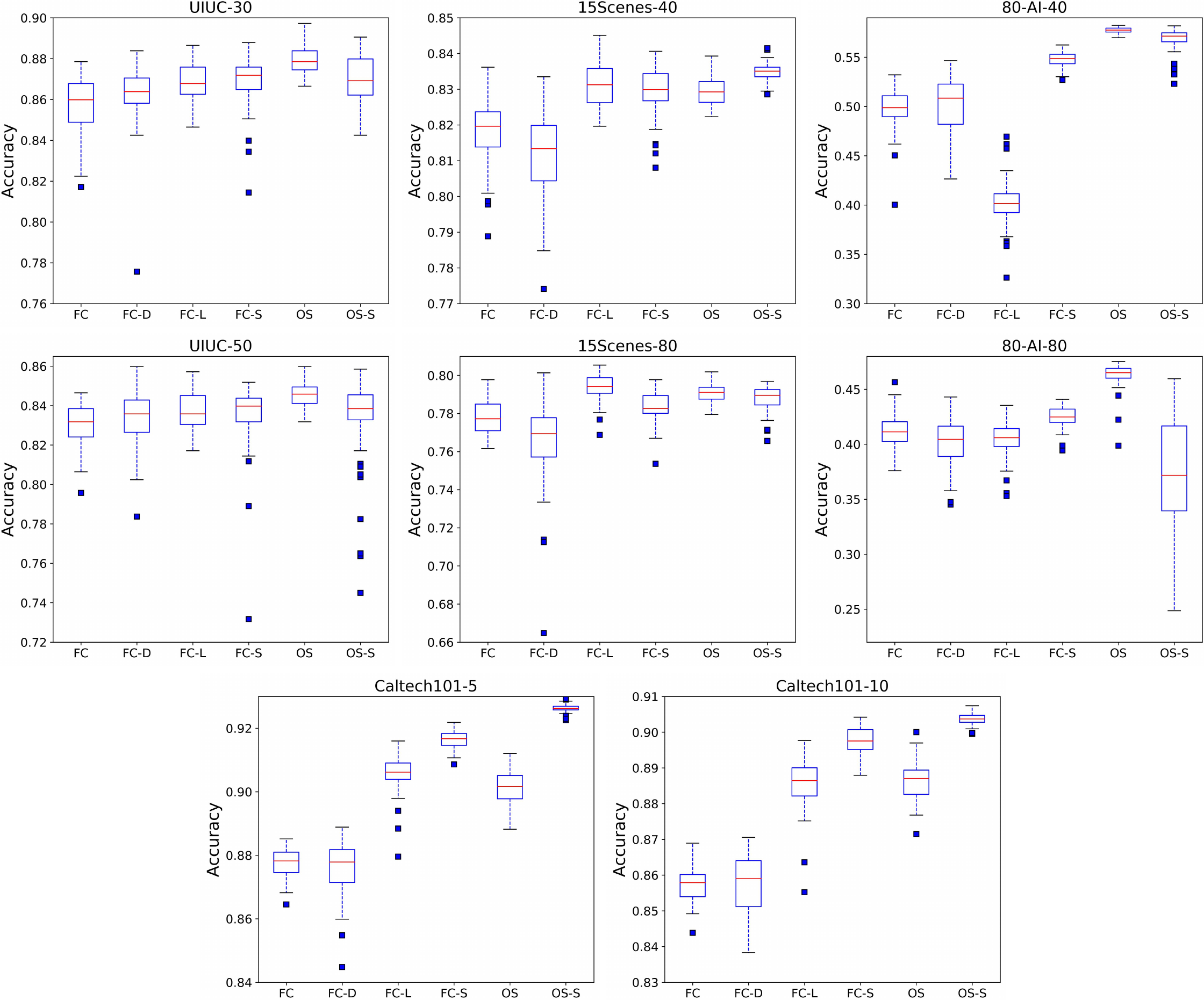}\label{1}
\end{center}

\caption{Comparison of accuracies obtained by FC, Dropout (FC-D), Lsoftmax (FC-L), SnapShot (FC-S), OS and OS-SnapShot (OS-S) via boxplots on the UIUC-30, UIUC-50, 15Scenes-40, 15Scenes-80, 80-AI-40, 80-AI-80, Caltech101-5 and Caltech101-10 datasets. In the boxplots, each method runs 60 rounds. }\label{reducing-60times}
\end{figure*}

For the 15Scenes and 80-AI datasets, in both the training and test datasets, each class contains $100$ samples. For the UIUC and Caltech101 datasets, unlike the 15Scenes and 80-AI datasets, different classes have different number of samples, and we randomly split the data of each class into the training and test sets evenly. 

We adopted a CNN feature extractor, VGG16~\cite{simonyan2014very}, which was pre-trained on ImageNet. First, we resized the images into identical sizes of $256\times 256$ and directly extracted the image features using the VGG16 network. Finally, we kept the features of the last convolutional layer and simply flattened them. The feature dimension of each image is $512 \times 8 \times 8 = 32768$.

\subsection{The Compared Methods and Their Implementation}\label{sec:Implementation}

To evaluate the classification performance of the proposed OSL, we compare the following methods: 1) fully connected network (FC); 2) Focal loss (Focal)~\cite{lin2017focal}; 3) Center loss (Center)~\cite{wen2016discriminative}; 4) Truncated $L_{q}$ loss (T-$L_{q}$)~\cite{zhang2018generalized}; 5) Iterative normalization (IterNorm)~\cite{huang2019iterative}; 6) Decorrelated batch normalization (DBN)~\cite{huang2018decorrelated}; 7) Dropout; 8) large-margin softmax loss (Lsoftmax)~\cite{liu2016large}; 9) SnapShot ensembling of FC (SnapShot)~\cite{huang2017SnapShot}; 10) the proposed OSL (OS); and 11) SnapShot ensembling of OS (OS-SnapShot). 

For FC, we used a fully connected network with two layers, where the activation functions of the first and second layers are rectified linear unit function (\emph{Relu}) and \emph{Softmax}, respectively. 
FC is optimized by minimizing the softmax cross-entropy loss based on the minibatch gradient descent. The optimization algorithm is the RMSprop with the initial learning rate $0.001$, the batch size is $32$, and the number of epochs is $100$.

For DBN and IterNorm, we placed them before each linear layer of FC. 
For DBN, we set group-number $=2$ for the UIUC and 15 scenes datasets and set group-number $=16$ for the 80-AI and Caltech101 datasets. For IterNorm, we set $T=10$ and group-number $=8$ for all the four datasets.

For Focal, Center, T-$L_{q}$, and Lsoftmax, we adopted the focal loss~\cite{liu2016large}, center loss, truncated $L_{q}$ loss, and large-margin softmax loss~\cite{liu2016large} to replace the softmax cross-entropy loss used in FC, respectively. For these four loss functions, we tried multiple sets of parameter values and selected the setting with best performance. Specifically, for Focal, we selected $\gamma =0.5$ for the 80-AI dataset and $\gamma =0.3$ for other three datasets. For Center, we selected a small 
loss weight, $\lambda=1e-10$. For T-$L_{q}$, we set $q=0.5$ and $k= 0.1$ for the UIUC dataset, and $q=0.1$ and $k= 0.1$ for the 15Scenes dataset. However, on the 80-AI and Caltech101 datasets, we did not find a set of $q$ and $k$ that makes T-$L_{q}$ fit the training data. For Lsoftmax, we set $m=2$. Other settings were kept the same as those for FC.

For Dropout, we added a Dropout layer after the hidden layer of FC. The probability that a neuron unit should be dropped is $0.5$. Other settings of Dropout are identical to FC. 

The SnapShot here is a SnapShot ensembling built on the FC network, where the learning rate is learned by using a cyclic cosine annealing method~\cite{loshchilov2016sgdr}, and the number of SnapShots is $2$. For OS, we replaced the fully connected classification layer with the proposed OSL in FC and other settings were identical to that of FC. For OS-SnapShot, except for the structure of base network, other settings are identical to that of SnapShot.

All the methods have been implemented with PyTorch~\cite{ketkar2017introduction}.

\subsection{Classification Accuracy}\label{Sec:sota}
We ran FC, Focal, Center, T-$L_{q}$, IterNorm, DBN, Dropout, Lsoftmax, SnapShot, OS and OS-SnapShot on the UIUC, 15Scenes, 80-AI and Caltech101 datasets for 60 rounds each. The mean and the standard deviation of the classification accuracy are listed in Table~\ref{A-10-A}, and the boxplot of classification accuracy is shown in Fig.~\ref{60times}. Larger mean and smaller standard deviation indicate better performance. 

Table~\ref{A-10-A} shows that, on the four datasets, FC is easy to overfit and has unstable performance. Moreover, Focal, Center, and T-$L_{q}$ underperform FC. IterNorm outperforms FC on 80-AI and DNB performs better than FC on Caltech101, and in other cases IterNorm and DBN underperform FC. Dropout performs slightly better than FC on UIUC, competitively with FC on 80-AI, and worse than FC on 15Scenes and Caltech101. Lsoftmax performs better than FC on UIUC, 15Scenes and Caltech101 but slightly worse than FC on 80-AI. SnapShot always has better performance than FC on the four datasets. OS performs better than FC, Dropout and Lsoftmax on UIUC, 15Scenes and 80-AI, showing larger mean and smaller variance. OS performs better than FC and Dropout, but worse than Lsoftmax and SnapShot, on Caltech101. We also evaluated the SnapShot ensembling version of OS, denoted as OS-SnapShot, and it obtains better performance on all the four datasets. 

\begin{table}[t]
\caption{The $p$-values of the proposed method (OS) and the referred methods, FC, Dropout, Lsoftmax and SnapShot, by the paired Student's t-test. Each method runs $60$ rounds on each dataset. }
 \label{test}
\begin{center}
\begin{tabular}{l|llll}\hline
 \bf {Datasets}  &\bf {FC}  &\bf {Dropout} &\bf {Lsoftmax} & \bf {SnapShot} 
 \\ \hline
UIUC           &$<$ 0.005   & $<$  0.005     &$<$ 0.005 &0.0116    \\
 UIUC-20        &$<$  0.005      &$<$  0.005           &$<$  0.005     & $<$  0.005    \\
UIUC-30        &$<$  0.005       &  $<$  0.005      &$<$  0.005     & $<$  0.005  \\
UIUC-40          & $<$  0.005       &$<$  0.005      &  $<$ 0.005   & $<$  0.005      \\
UIUC-50      & $<$  0.005       & $<$  0.005         &$<$  0.005   & $<$  0.005   \\
\hline
15S.      &$<$ 0.005     & $<$ 0.005     & 0.6014   & 0.0087    \\
15S.-10      &$<$ 0.005      & $<$ 0.005   & 0.8284  &  0.0057   \\
15S.-30      & $<$ 0.005      & $<$ 0.005     &0.1437   & 0.7747  \\ 
15S.-50         & $<$ 0.005   &  $<$ 0.005    &0.0225  & 0.0090  \\
15S.-70     &$<$ 0.005       &$<$ 0.005      & 0.0093  & $<$ 0.005    \\
\hline 
80-AI        & $<$  0.005       & $<$  0.005         &$<$  0.005   & $<$  0.005    \\
80-AI-10      & $<$  0.005       & $<$  0.005         &$<$  0.005   & $<$  0.005    \\
80-AI-30  & $<$  0.005       & $<$  0.005         &$<$  0.005   & $<$  0.005      \\ 
80-AI-50      & $<$  0.005       & $<$  0.005         &$<$  0.005   & $<$  0.005   \\                  
80-AI-70    & $<$  0.005       & $<$  0.005         &$<$  0.005   & $<$  0.005     \\
\hline 
Cat.101       & $<$  0.005       & $<$  0.005         &$<$  0.005   & $<$  0.005     \\
Cat.101-5       & $<$  0.005       & $<$  0.005         &$<$  0.005   & $<$  0.005     \\
Cat.101-10     & $<$  0.005       & $<$  0.005         &$<$  0.005   & $<$  0.005    \\
\hline
\end{tabular}
\end{center} 
\end{table}

\begin{table}[t]
\caption{Classification accuracies obtained by FC and OS on the UIUC and 15Scenes (15Sce.) datasets when we change the size of the hidden layer. FC and OS have 2 layers and the size of their hidden layer is shown. Each setting of FC and OS has been evaluated for $60$ rounds.} \label{width}
\begin{center}
\begin{tabular}{l|llllll}\hline
\bf {UIUC} & \bf{16} & \bf{32}  &\bf {64}  &\bf {128} &\bf {256} & \bf {512} 
\\ \hline
FC-Mean     &  0.8525    & 0.8837 &0.8949  &0.8800      &0.8973  &0.8945  \\
FC-Std.    & 0.0218 &0.0151 &0.0146  &0.0074      &0.0073 &0.0096 \\
\hline
OS-Mean                    &0.9002 &0.9016 &0.9022 &0.8987      &0.8877 &0.8836 \\
OS-Std.  &0.0074 &0.0054  &0.0056  &0.0055     &0.0070 &0.0065  \\
\hline 
\hline
\bf{15Sce. }  & \bf{16}  & \bf{32}  &\bf {64}  &\bf {128} &\bf {256} & \bf {512} 
\\ \hline
FC-Mean    &N/A      &0.8070   & 0.8331   &0.8409   &0.8415  &0.8431   \\
FC-Std.   &N/A    &0.0131   &0.0080      &0.0062    &0.0059 &0.0061 \\
\hline
OS-Mean  & N/A        &0.8419   &0.8439      &0.8429    &0.8401  &0.8373  \\
OS-Std.    &N/A   &0.0043  &0.0037   &0.0038    & 0.0036   &0.0040 \\
\hline
\end{tabular}
\end{center} 
\end{table}

Fig.~\ref{60times} demonstrates that on the UIUC-Sports dataset, the boxplot of OS is more compact than those of FC, Focal, Center, T-$L_{q}$, IterNorm, DBN, Dropout, Lsoftmax and SnapShot. Moreover, it has no bad-performing outlier. On the 15Scenes dataset, the boxplot of OS is more compact than those of FC, Dropout and SnapShot, but is close to that of Lsoftmax. On the 80-AI dataset, the boxplot of OS is also more compact than those of other methods, and both the central mark and the edges of the box are higher than those of other methods used for comparison. On the Caltech101 dataset, the boxplot of OS is also more compact and higher than those of FC and Dropout, but is lower than those of Lsoftmax and SnapShot. Finally, the boxplots of OS-SnapShot is always more compact than those of all other compared methods, with both the central mark and the edges of the boxes higher.

\subsection{Classification Accuracy for Different Training Set Sizes}\label{sec:Different Training Sizes}

Table~\ref{reducing} and Fig.~\ref{reducing-60times} show that, on the UIUC datasets with decreased training set sizes, Dropout shows small improvement over FC. Lsoftmax achieves larger mean and lower variance than FC and Dropout. SnapShot, an ensemble method, obtains larger mean, but the variance is similar to the others. It is encouraging to see that OS has larger mean and lower standard deviation than all the four compared methods on all the different sizes of training sets. OS-SnapShot outperforms OS in terms of the mean accuracy. 

On the 15Scenes dataset, Dropout performs worse than FC, in terms of variance and mean. Lsoftamx consistently performs better than FC. SnapShot shows slight improvement but is worse than Lsoftmax. OS still performs well, as expected. OS-SnapShot improves the mean compared with OS. 

On the 80-AI dataset, Lsoftmax and Dropout perform worse than FC. Lsoftmax notably is difficult to converge with the four training sizes. SnapShot performs better than FC, Dropout and Lsoftmax. OS has smaller variances and larger means and performs much better than all other methods. 

On the Caltech101 dataset, SnapShot performs better than FC, Dropout and Lsoftmax. OS has smaller variances and larger means than FC and Dropout and has similar performance to Lsoftmax. OS-SnapShot performs the best. 

In summary, the proposed OSL performs better than other compared methods on these datasets.

\begin{table*}[h]
\caption{Comparison of the classification accuracies on the UIUC and 15Scenes datasets when we vary the depth of the neural networks. The sizes of the hidden layers are listed. Each structure of FC and OS is tested for $60$ rounds; the mean and standard deviation (Std.) are listed in the cells of the table.} \label{depth}
\begin{center}
\begin{tabular}{l|l|lllll}\hline
\bf {UIUC}    & \bf{Measure}   & \bf{32-8}  &\bf {64-32-8}  &\bf {128-64-32-8}  &\bf{256-128-64-32-8} &\bf {512-256-128-64-32-8} 
\\ \hline
FC       & Mean                        &0.8837  &0.8834  &0.8796     &0.8687 & 0.8545\\
                      & Std.   &0.0151 &0.0101  &0.0096     &0.0101  & 0.0129 \\
\hline
OS     &Mean                       &0.90167  &0.8933 &0.8872     &0.8575 & 0.8482\\
                    &Std.         &0.0055  &0.0089  &0.0095     &0.0136   &0.0169\\
\hline 
\hline
\bf{15Scenes }        & \bf{Measure}       &   &\bf {64-15}  &\bf {128-64-15} &\bf{256-128-64-15} &\bf {512-256-128-64-15} 

\\ \hline
FC        &Mean             &       & 0.8331    &0.8378    & 0.8391   &0.8288  \\
&Std.           &       &0.0080     &  0.0061  &0.0056   &0.0077   \\
\hline
OS         &Mean            &  &0.8439      &  0.8406  & 0.8404  &0.8208 \\
&Std.                  &       &0.0037      & 0.0062  &0.0057   &0.0083\\
\hline
\end{tabular}
\end{center} 
\end{table*}

\begin{table*}[h]
\caption{Comparison of the classification performance on the UIUC and 15Scenes datasets, with $60$ rounds of valuations for each method.} \label{feature-extractor} 
\begin{center}
\begin{tabular}{c|c|ccccccccc}\hline 
\bf {UIUC }  & \bf{Measure} &\bf {FC} &\bf{Focal}  & \bf{Center} & \bf{IterNorm} & \bf{DBN}  &\bf {Dropout}  & \bf {SnapShot}  & \bf {OS} &\bf{OS-SnapShot} 
\\ \hline
   VGG16 &Mean    & 0.8787 & 0.8347 & 0.8506 &0.8378  &0.8837   &0.8889    &0.8950 &  0.9016  &\bf{0.9041}   \\

       &Std. &0.0151   & 0.0135 &0.0283 & 0.0242 &  0.0752   &0.0113  & 0.0175 & 0.0054 &\bf{0.0030}  \\
\hline
 DenseNet121 &Mean   &0.5532 &  0.5389& 0.3248  &0.3163 &0.5234 &0.4697  &0.6225&  0.6438  &\bf{0.6954}   \\

       &Std.             &0.0631 & 0.0797 & 0.1105  &0.0560 &0.1197 & 0.0891    & 0.0391 & 0.0633 &\bf{0.0158}  \\
\hline
\hline
\bf {15Scenes}  & \bf{Measure} &\bf {FC} &\bf{Focal}  & \bf{Center} & \bf{IterNorm} & \bf{DBN}  &\bf {Dropout}  & \bf {SnapShot}  & \bf {OS} &\bf{OS-SnapShot} 
\\ \hline
 VGG16 &Mean    &0.8331 &0.8285  & 0.7911  & 0.8291 & 0.8005  &0.8321   & 0.8413  &0.8439  &\bf{0.8464}  \\
    & Std. &0.0080  & 0.0007 & 0.0152  & 0.0007 & 0.0357  &0.0110   &0.0066  &0.0037 &\bf{0.0022} \\
\hline
 DenseNet121 &Mean     &0.5783  &0.5653 & 0.3497 & 0.4836 & 0.5694 &0.5443 & 0.6366  &0.6200  &\bf{0.6799}  \\
   & Std.                  &0.0465  &0.0490 & 0.1029 & 0.1158 & 0.0546 &0.0379  &0.0154  &0.0359 &\bf{0.0045} \\
\hline
\end{tabular}
\end{center}
\end{table*}

\subsection{Paired Student's t-test}\label{sec:Paired Student's t-Test}
The experimental results presented in the previous sections show that the proposed OSL obtains better performance. To confirm that the improvement is statistically significant, we performed a paired Student's t-test~\cite{zimmerman1997teacher} for OS vs. FC, OS vs. Dropout, OS vs. Lsoftmax, and OS vs. SnapShot, and the $p$-values are listed in Table~\ref{test}. Following~\cite{singh2017big}, we set the significance level as $0.05$. 

For the tests of OS vs. FC, all the $p$-values are much smaller than the significance level. Thus, the null hypothesis that FC and OS have the identical mean is always rejected. Likewise, the null hypothesis that Dropout has the identical mean to OS is also rejected. 

In terms of Lsoftmax, on the UIUC, 80-AI and Caltech101 datasets, the null hypothesis that Dropout and OS have the identical mean is always rejected. However, on the 15Scenes dataset, the $p$-values are larger than the significance level in most cases. Thus, on 15Scenes, the null hypothesis that Lsoftmax and OS have the identical mean is generally not rejected.

For SnapShot, on the 80-AI and Caltech101 datasets, the null hypothesis that SnapShot and OS have the identical mean is always rejected. However, on the UIUC and 15Scenes datasets, the null hypothesis that SnapShot has the identical mean with OS is not always rejected. 

\subsection{Effect of Changing the Width of the Hidden Layers on OSLNet}\label{sec:Effect on the Width}

In the previous experiments, our method and all the baseline methods used the network structure with $32$ neurons in the last hidden layer. To explore the effect of the width of the hidden layer on the performance of OSLNet, we changed the number of hidden neurons in both FC and OS from $16$ to $512$ on the UIUC dataset, with other settings unchanged. At the same time, we also changed the number of the hidden neurons from $32$ to $512$ on the 15Scenes dataset, with other settings unchanged. FC and OS were evaluated on the two datasets for $60$ rounds, respectively.

From Table~\ref{width}, we can observe that, with the increase in the width of network, the mean of FC has generally slight improvement on UIUC, and the standard deviation decreases. That is, FC generally performs better when the number of the hidden neurons is increased. In contrast, OS performs better when the number of the hidden neurons is small. Particularly, when the number of the hidden neurons is $16$, OS is much superior to FC. Moreover, on 15Scenes, a similar pattern can be observed. 
In addition, OS has better performances when approximately $2$ to $4$ hidden neurons are assigned to each class on these two datasets with 8 and 15 classes, respectively.  

\subsection{Effect of Changing the Depth of the Hidden Layers on OSLNet} \label{sec:Effect on the Depth}
We also evaluate the effect of the depth of the network on OSLNet. In particular, we varied the depth of the network from $2$ to $5$ layers and from $2$ to $4$ layers in both FC and OS on the UIUC and 15Scenes datasets, respectively. Each structure of FC and OS is evaluated on UIUC and 15Scenes for $60$ rounds. The size of each layer, the depth of each layer, the corresponding mean and standard deviation of accuracy are listed in Table~\ref{depth}. 

From Table~\ref{depth}, we can observe that, on UIUC, the mean values of FC and OS become smaller, when the depth is increased, that is, the performance of both FC and OS decreases, and OS decreases faster than FC. A similar pattern is also found on 15Scenes. Therefore, shallow structures of both FC and OS perform better than deep ones of FC and OS on the two datasets.

\subsection{Effect of Changing Feature Extractor on OSLNet} \label{sec:changing feature extractor}
In all the experiments presented above, we used a pre-trained VGG16 as the feature extractor. In this section, we changed the feature extractor to a pre-trained DenseNet121 and ran all the compared methods on the UIUC and 15Scenes datasets for $60$ rounds each. The classification results are shown in Table~\ref{feature-extractor}. Here, we do not list the performance of T-$L_{q}$ and Lsoftmax, as it cannot fit the training data if the feature extractor is DenseNet121.

From Table~\ref{feature-extractor}, we can observe that: 1) On the UIUC and 15scenes datasets, each method performs better with the VGG16 feature extractor than with the DenseNer121 feature extractor. 2) When the feature extractor is DenseNet121, on the UIUC dataset, the proposed OS outperforms all the compared methods, and on the 15scenes dataset, OS underperforms SnapShot
and outperform other compared methods. Then we can conclude that OS-SnapShot performs best on both datasets.

In summary, the performance comparision on DenseNet121 is similar to those on VGG16, which further shows the applicability of the OSL.

\begin{figure*}[htbp]
\begin{center}
\includegraphics[width=1.7in]{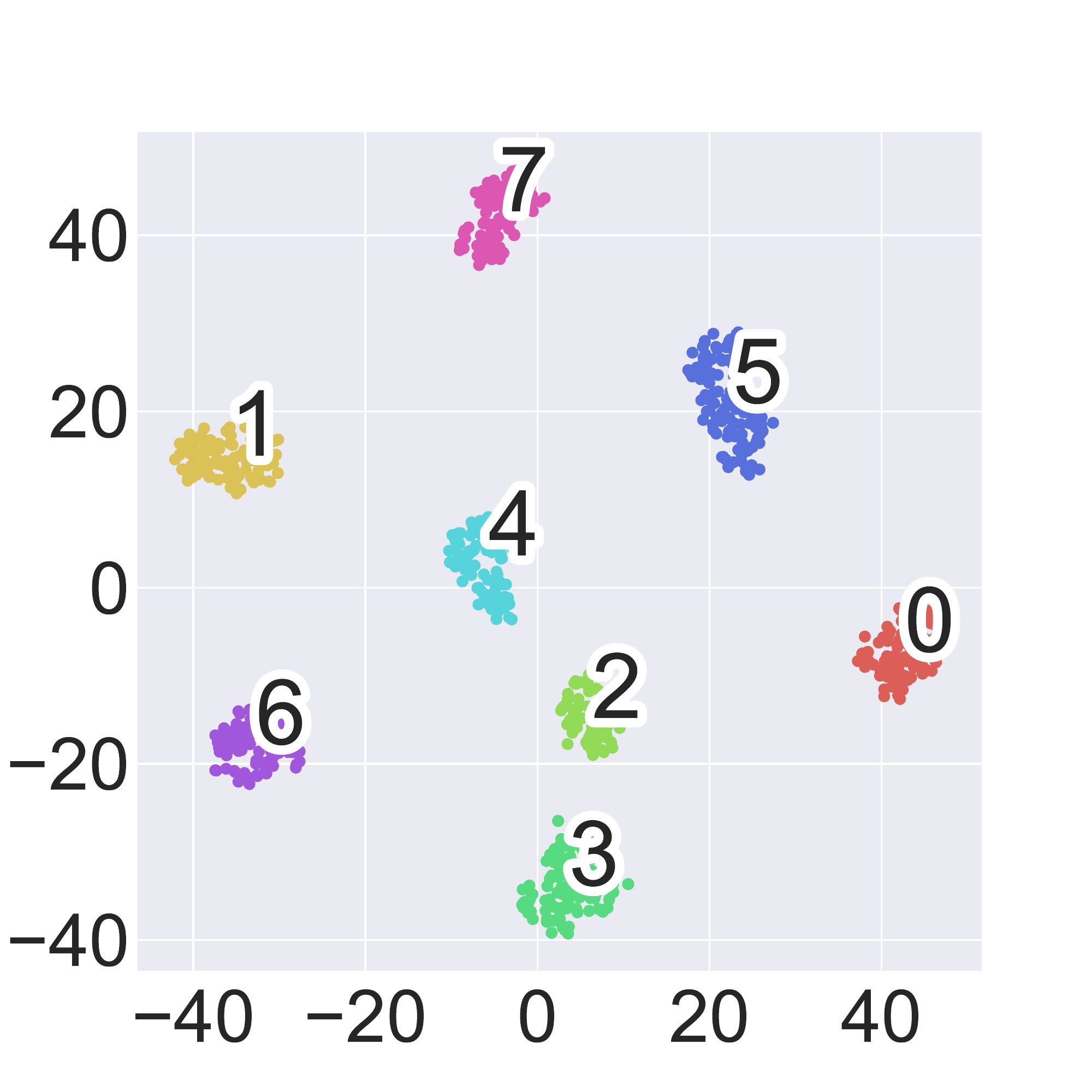}\label{train-0}
\includegraphics[width=1.7in]{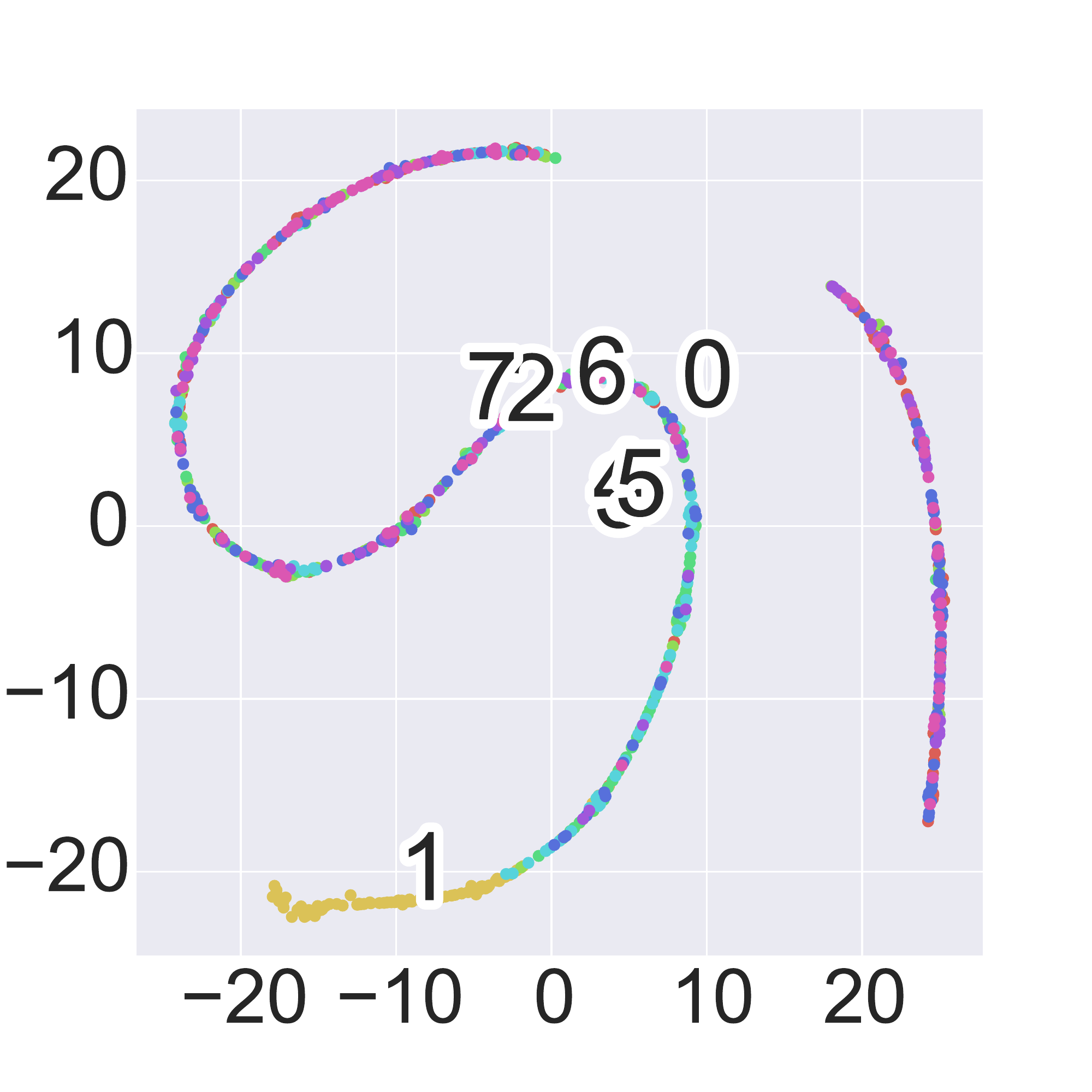}\label{9}
\includegraphics[width=1.7in]{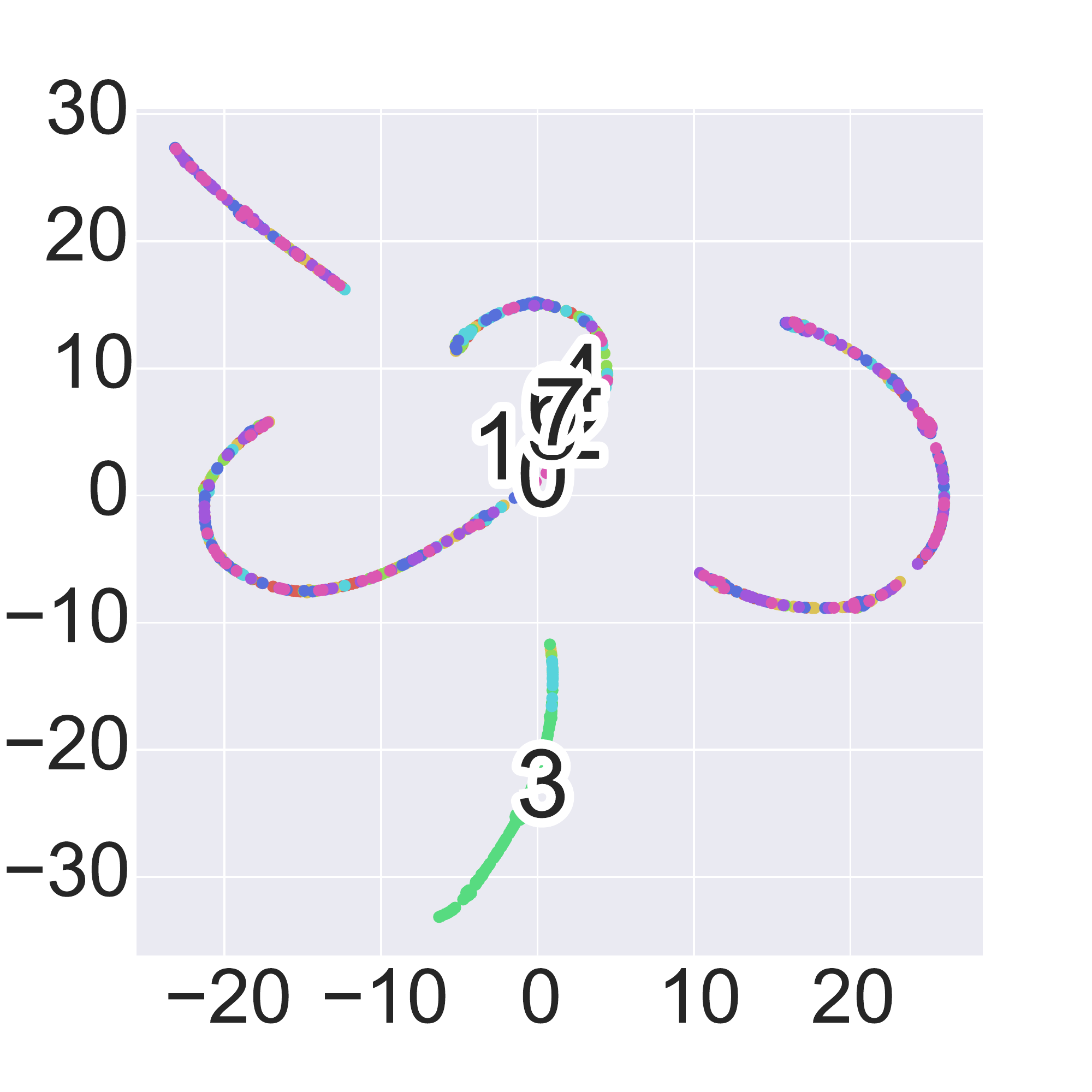}\label{10}
\includegraphics[width=1.7in]{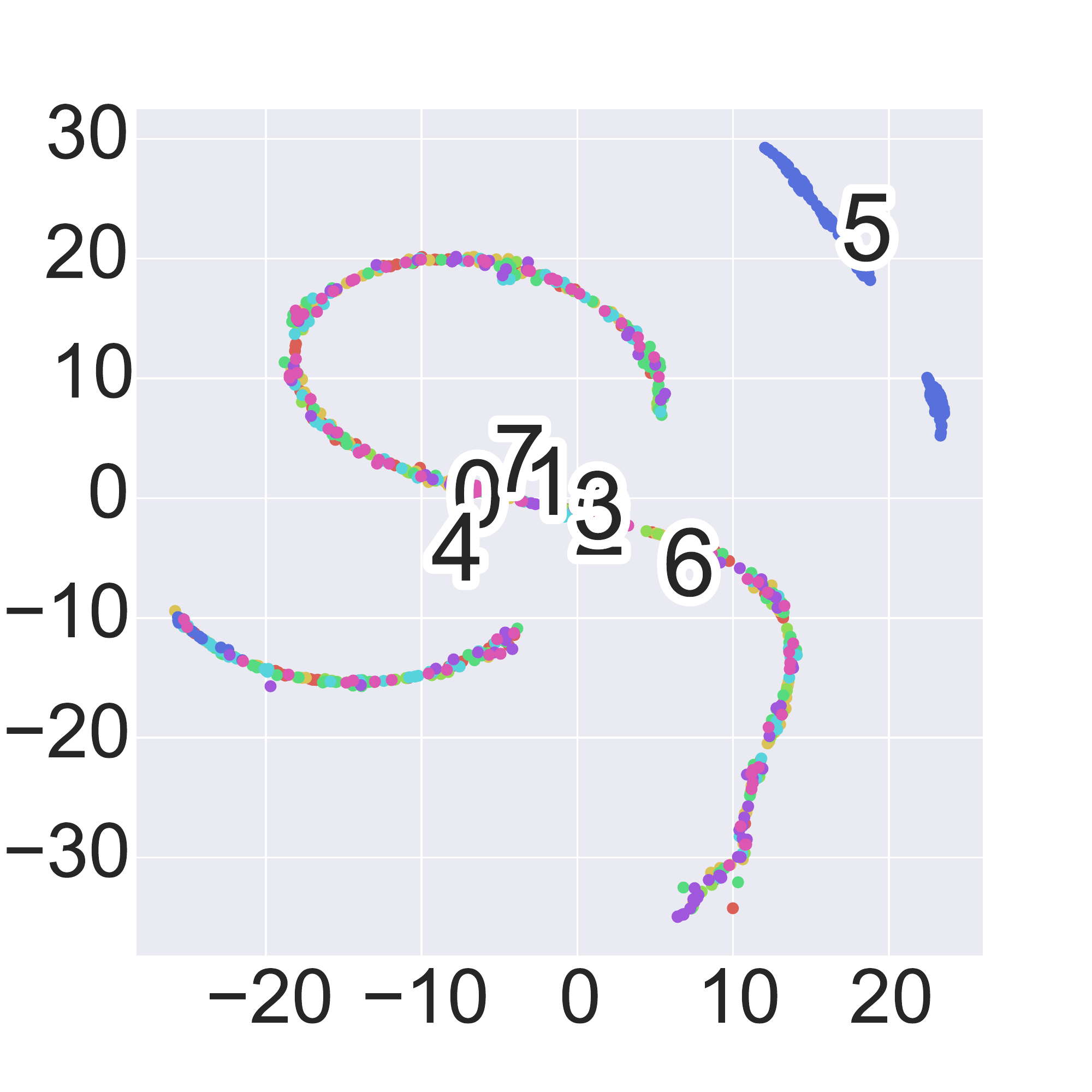}\label{11}
\includegraphics[width=1.7in]{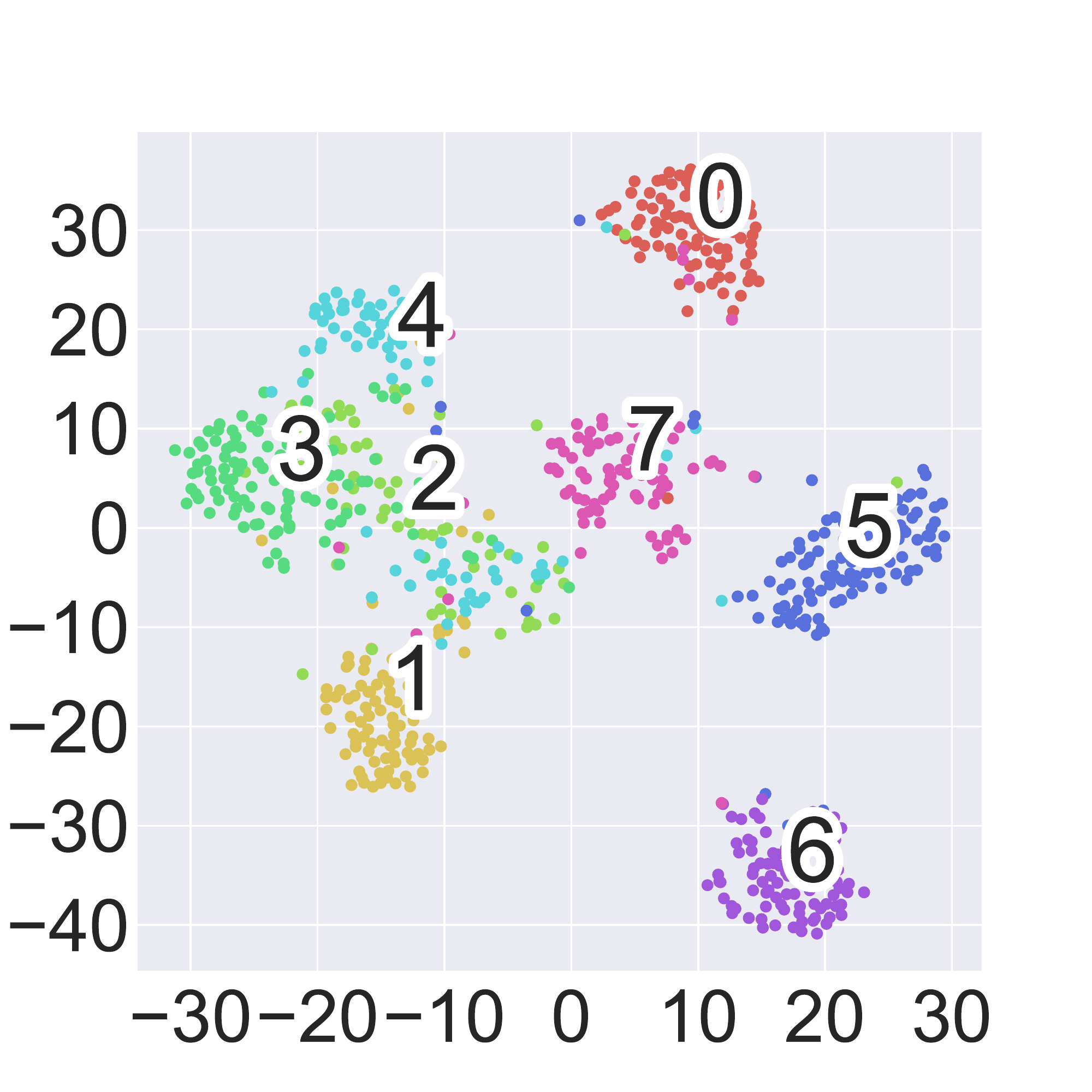}\label{test-0}
\includegraphics[width=1.7in]{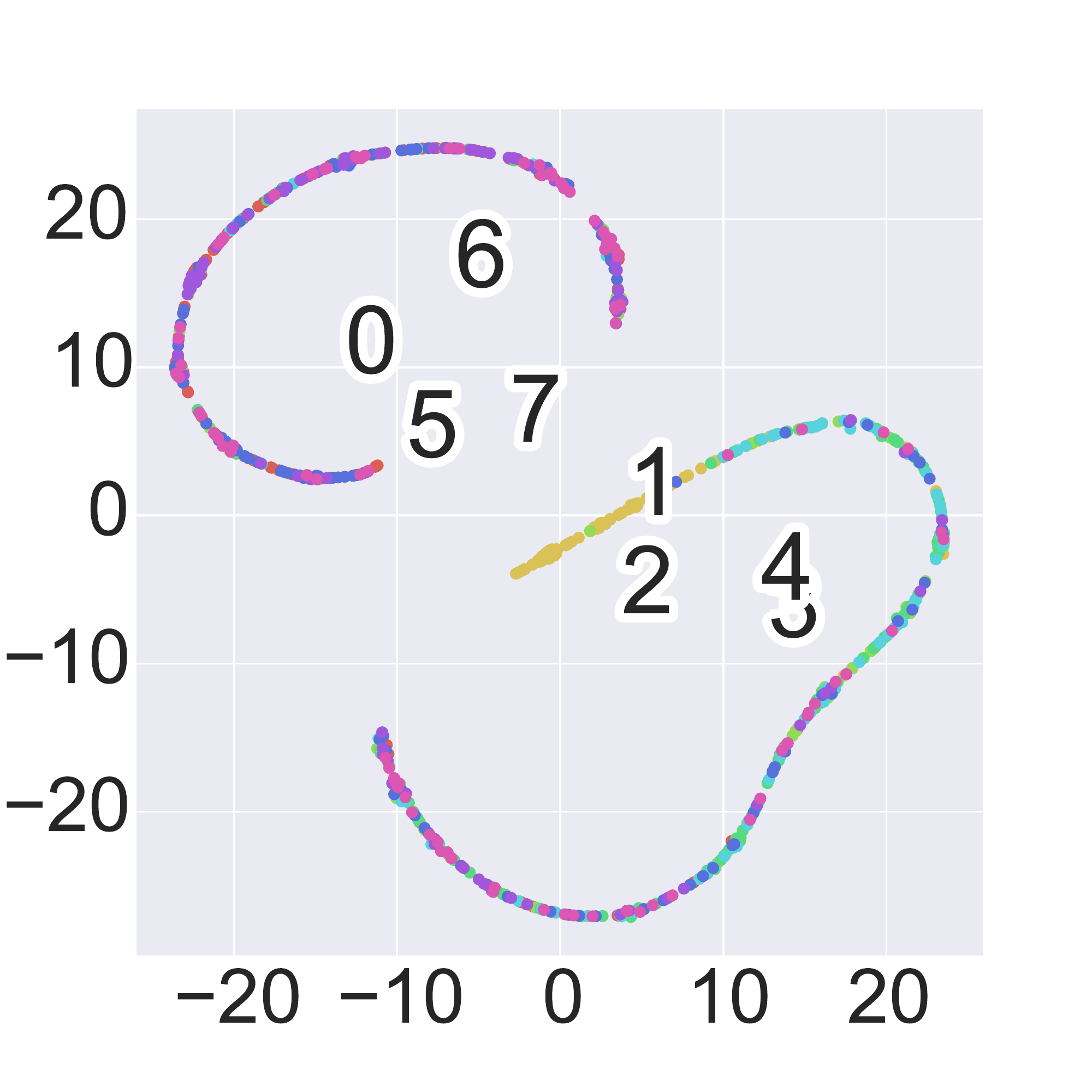}\label{13}
\includegraphics[width=1.7in]{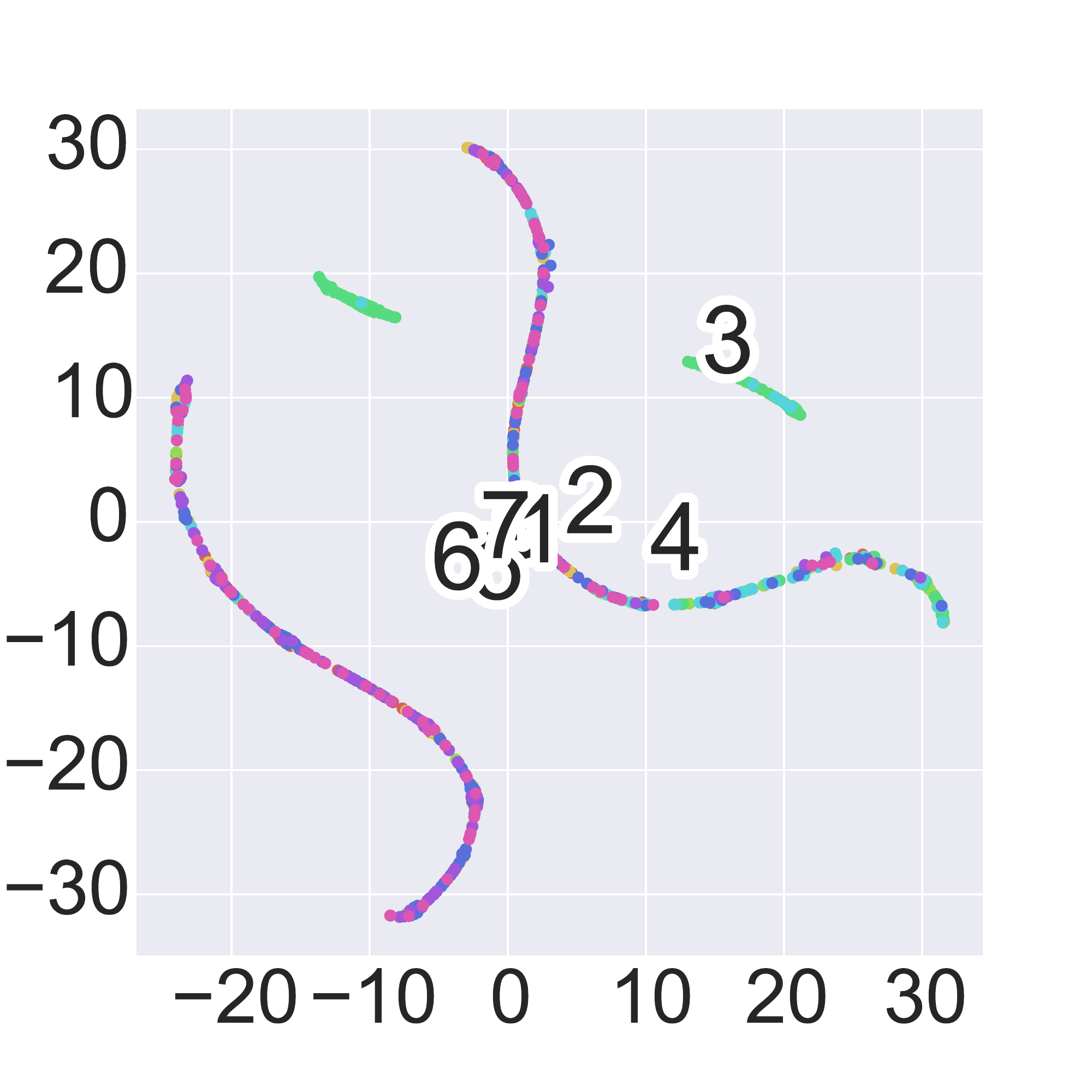}\label{14}
\includegraphics[width=1.7in]{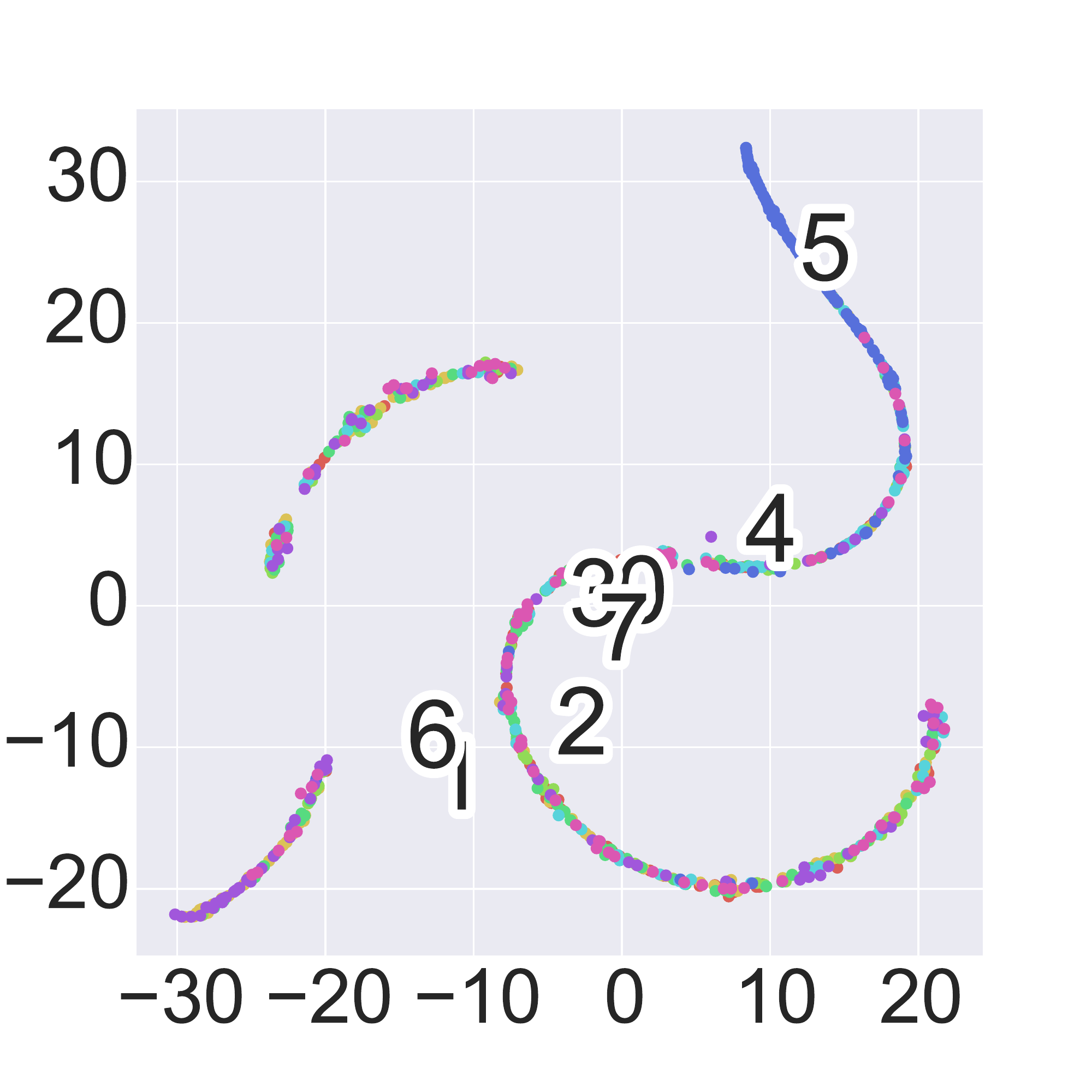}\label{15}
\end{center}
\caption{Visualization of the input feature of the OSL via t-SNE. The four columns from left to right are: 1) the $32$-dimensional input feature (without truncation) of OSL, 2) the 4th-8th dimensional features, 3) the 13th-16th dimensional features, and 4) the 21st-24th dimensional features, of which the last three columns correspond to the classes 1, 3 and 5 in the OSL, respectively. Upper row: the training data. Lower row: the test data.
}\label{feature}
\end{figure*} 

\subsection{Feature Visualization}\label{sec:Features Visualization}
To gain insights on the proposed OSL, we visualize the input features of OSL. Take the experiments on the UIUC dataset as an example: the input dimension of OSL is $32$, and the output dimension of OSL is $8$, as determined by the number of classes. Therefore, according to the design of OSL, each output neuron (i.e., each class) is assigned $4$ input neurons. For example, the 1st-4th dimensional input features are for the class $0$, and the 5th-8th dimensional input features are for the class $1$ (The class labels are from $0$ to $7$ on the UIUC dataset).
We use t-SNE~\cite{maaten2008visualizing} to visualize the input features of OSL in Fig.~\ref{feature}. In Fig.~\ref{feature}, the first column is for the 32-dimensional input features without truncation, and the following three columns are for the 4th-8th dimensional features, the 13th-16th dimensional features, the 21th-24th dimensional features, which correspond to the classes 1, 3 and 5 in the OSL, respectively.

From Fig.~\ref{feature}, it can be observed that, in the first column, the samples from different classes are separate. In the following three columns, the samples of the corresponding class are put to one end of a feature curve or a place far from the features of other classes. A similar pattern appears for other classes in the UIUC dataset and in other datasets. This demonstrates that, in the proposed OSL, the input neurons assigned to each class are indeed responsible for learning discriminative features for differentiating that class from other classes.

\begin{figure*}[t]
\begin{center}
\includegraphics[width=2.5in]{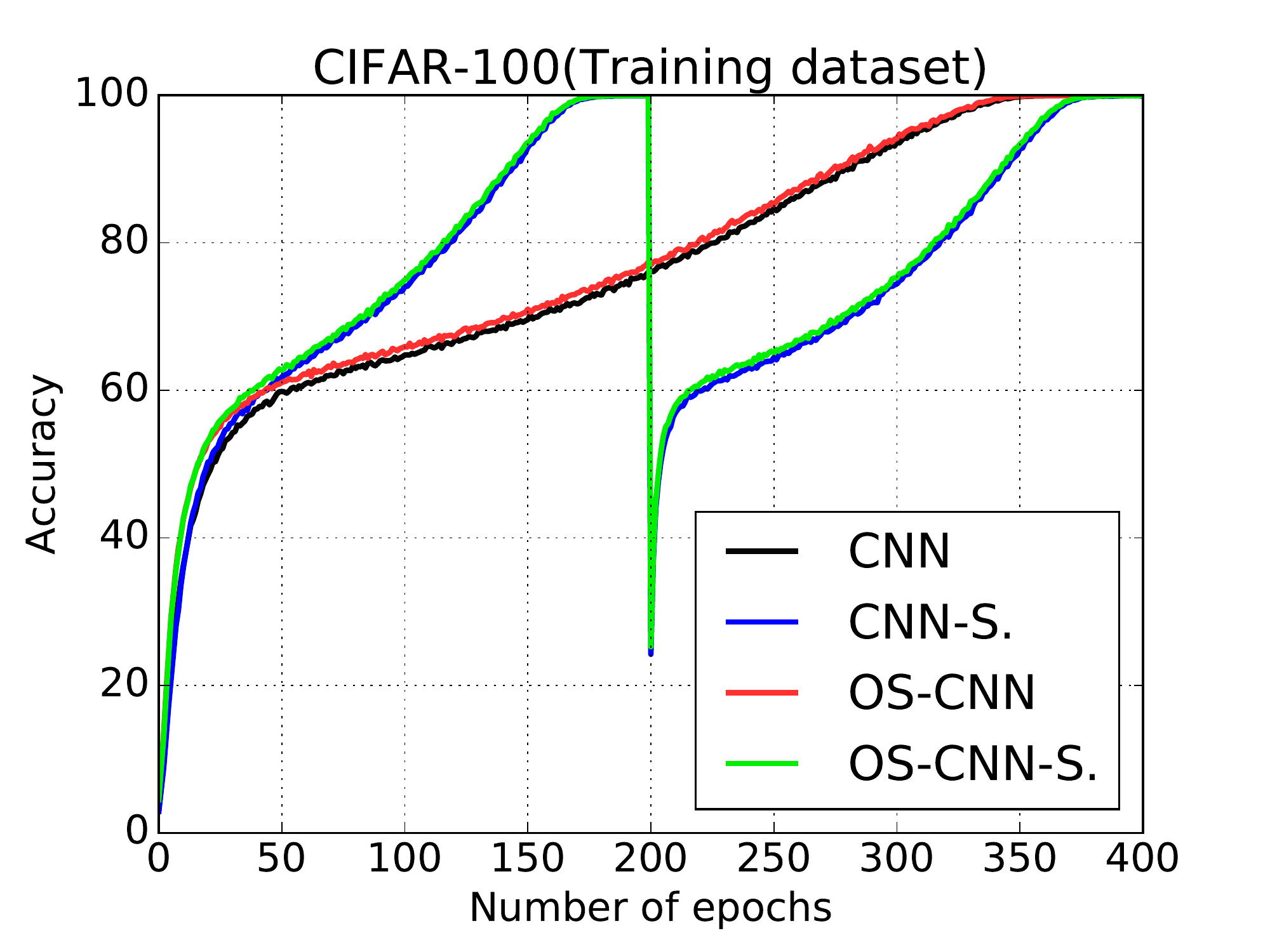}\label{17}
\includegraphics[width=2.5in]{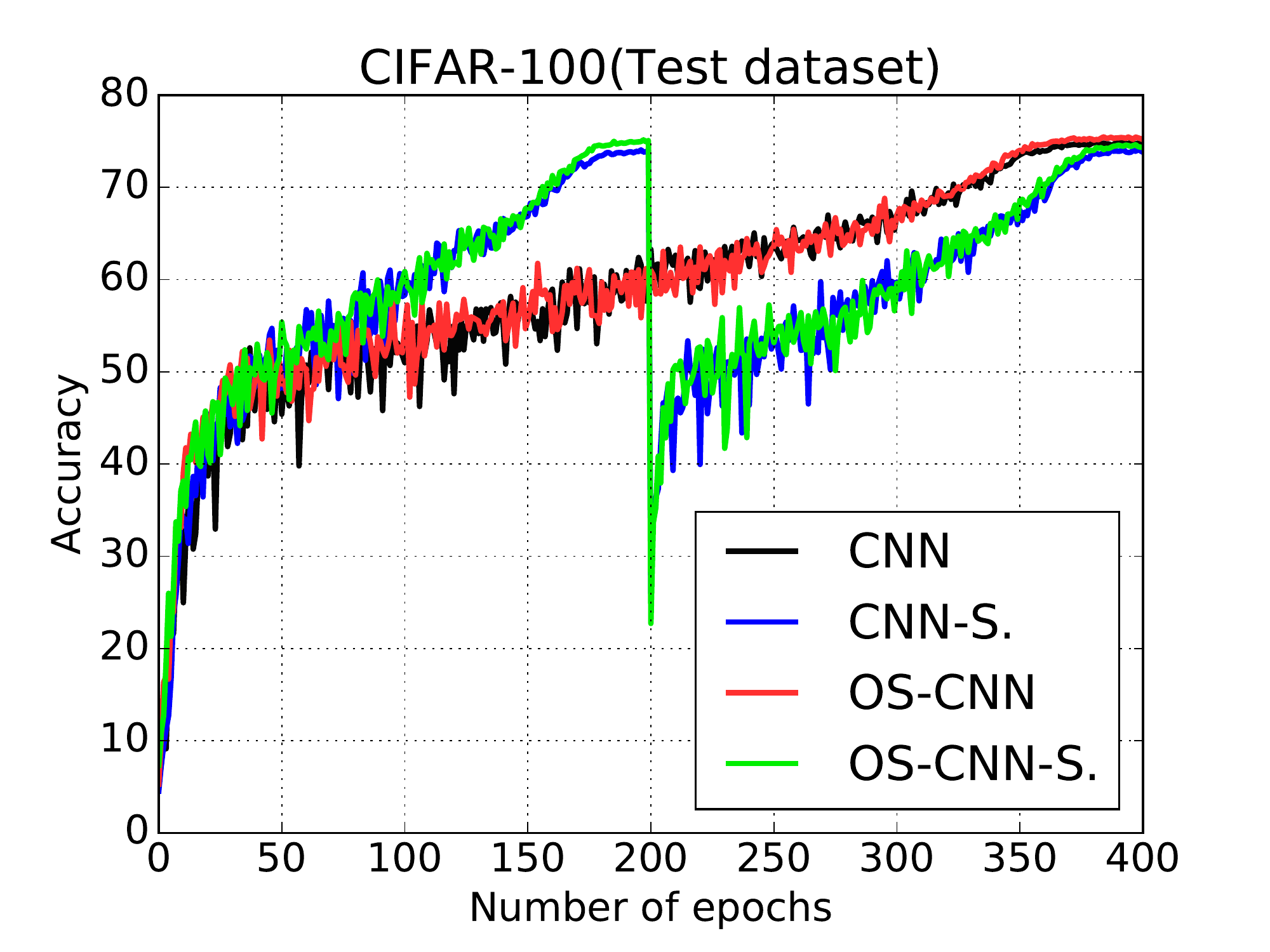}\label{18}
\includegraphics[width=2.5in]{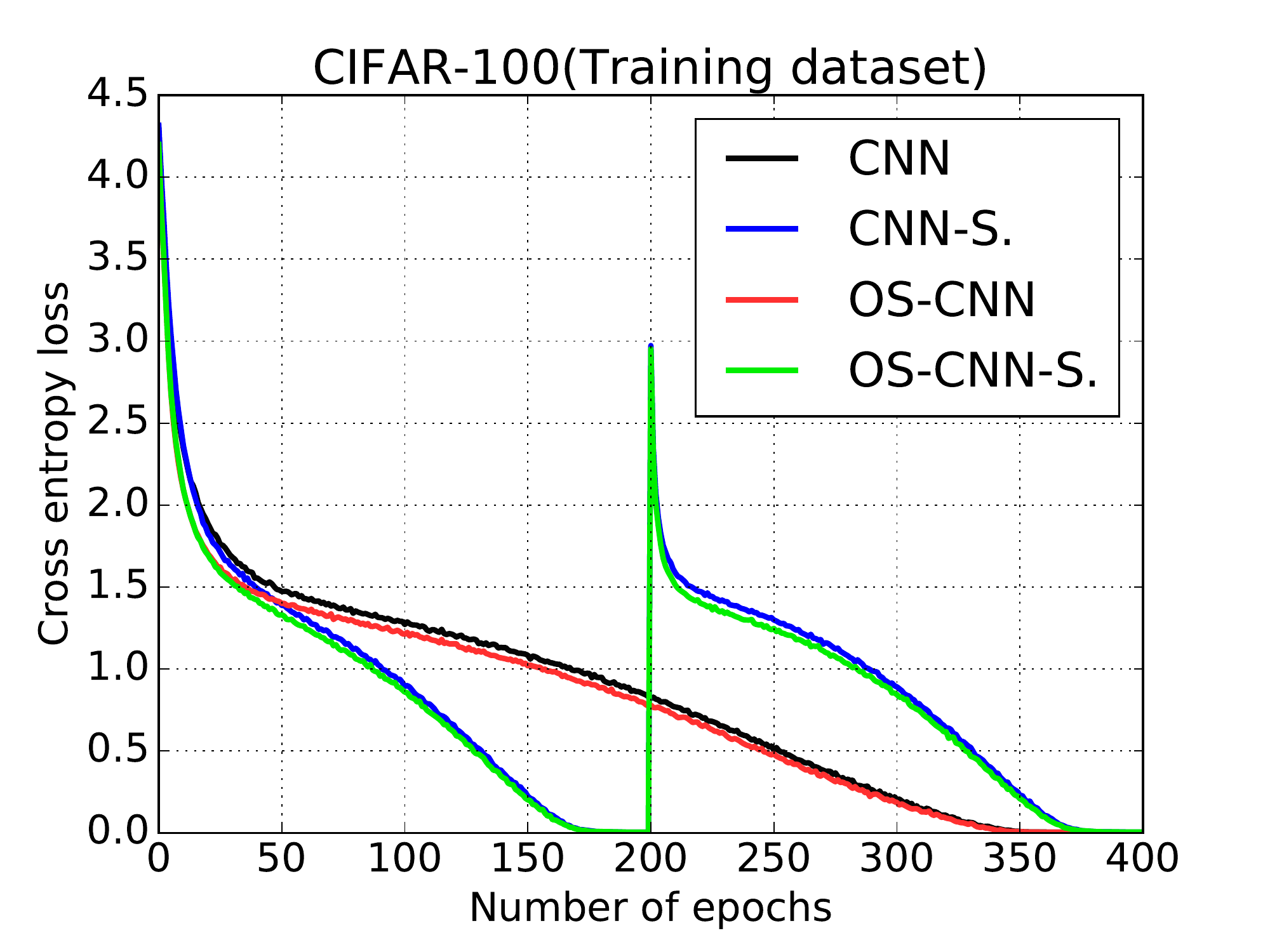}\label{19}
\includegraphics[width=2.5in]{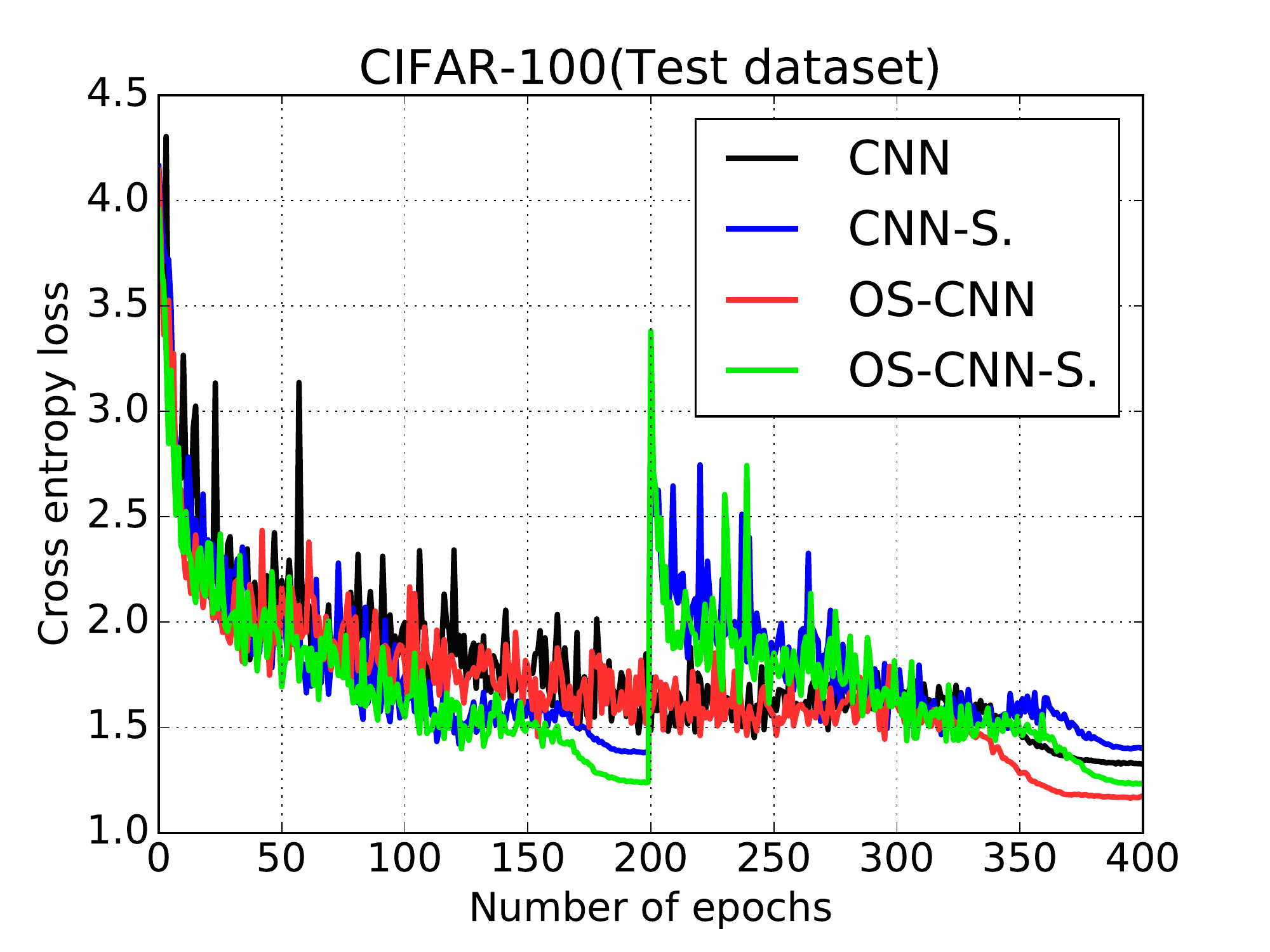}\label{20}
\end{center}
\caption{The cross-entropy loss and accuracy of CNN, CNN-SnapShot (CNN-S.), OS-CNN, and OS-CNN-SnapShot  (OS-CNN-S.) on the CIFAR-100 dataset. The left-hand column is for the training data, and the right-hand column is for the test data.}\label{loss}
\end{figure*}

\subsection{OSLNet on Large-sample Datasets} \label{sec:OSLNet on Large-sample Datasets}

To evaluate the applicability of our OSL on large-sample classification, we selected the following three datasets.

\begin{itemize}

\item CIFAR-10: This dataset~\cite{krizhevsky2009learning} consists of 60000 $32\times32$ color images in $10$ classes with 6000 images per class. There are $50000$ training images and $10000$ test images. The classes are as follows: airplane, automobile, bird, cat, deer, dog, frog, horse, ship and truck.

\item CIFAR-100: This dataset~\cite{krizhevsky2009learning} consists of 60000 $32\times32$ color images in $100$ classes with $600$ images per class. There are $500$ training images and $100$ test images per class. 

\item MNIST: This dataset of handwritten digits~\cite{lecun1998gradient} has a training set of $60000$ examples and a test set of $10000$ examples. It is a subset of a larger set available from MNIST. The digits have been size-normalized and centered in a fixed-size image. The size of the images is $28\times28$.
\end{itemize}
We compare the following methods: 1) CNN; 2) SnapShot ensembling of CNN (CNN-SnapShot); 3) a CNN with OSL (OS-CNN); and 4) SnapShot ensembling of OS-CNN (OS-CNN-SnapShot). The classification results are lised in Table~\ref{large-sample} and demonstrated in Fig.~\ref{loss}.

On the CIFAR-10 and CIFAR-100 datasets, for CNN, we used the VGG16 style network, where the convolutional layer has batch normalization, and the fully connected parts have $2$ hidden layers of $16$ units each. The epoch number is $400$, and the learning rate decreases from $0.01$ to $0$ by using the cosine annealed method. The optimization method iss SGD. For OS-CNN, we only replaced the last fully connected layer with the OS layer in CNN, while other settings unchanged. For CNN-SnapShot, the learning rate decreases from $0.01$ to $0$ by using the annealed cosine method within $200$ epochs, which restarted to the identical process one more time. The total number of epochs is $400$, and the number of SnapShot networks is $2$. Other settings are identical to those for CNN. For OS-CNN-SnapShot, except for the classification layer, other settings are identical to that for CNN-SnapShot.  

On the MNIST dataset, we followed the CNN structure published by PyTorch for MNIST. In the structure, there are two modules including convolution (Maxpooling and Relu activation) and two fully connected layers with 50 hidden neurons. The optimization method remains to be SGD, and the learning rate decreases from $0.01$ to $0$ as the cosine annealed method is used. The epoch number is $400$. For OS-CNN, CNN-SnapShot and OS-CNN-SnapShot, except for the structure of CNN, other settings were the same as that for the CIFAR-10 and CIFAR-100 datasets.

From the classification accuracy shown in Table~\ref{large-sample}, the curves of the cross-entropy loss, and the accuracy shown in  Fig.~\ref{loss}, it can be observed that CNN and OS-CNN have similar performance on the three datasets. That is, the proposed OSL is also applicable for large-sample classification.

\begin{table}[t]
\caption{Comparison of classification accuracy of CNN, CNN-SnapShot ensembling (CNN-S.), CNN with OSL (OS-CNN) and OS-CNN-SnapShop ensembing (OS-CNN-S.). } \label{large-sample}
\begin{center}
\begin{tabular}{c|cccc}\hline
\bf {Datasets}  &\bf {CNN} & \bf {CNN-S.} & \bf {OS-CNN} &\bf{OS-CNN-S.} 
\\ \hline
CIFAR-10       &0.9379   &    0.9445&   0.9418 &   0.9468 \\
\hline
CIFAR-100     & 0.7467 & 0.7648 & 0.7529  &0.7690 \\
\hline

MNIST          & 0.9921 & 0.9928  & 0.9925  & 0.9926    \\
          
\hline
\end{tabular}
\end{center} 
\end{table}

\subsection{Discussion} \label{sec:Discussion}
The focal loss places large weights on the samples that are difficult to identify; the center loss constructs a constraints that the features of samples from the same class must be close in the Euclidean distance; and the truncated $L_{q}$ loss is a noise-robust loss. In the experiments above, the three loss functions are not effective on the four small-sample datasets. It can also be observed that IterNorm and DBN perform unstably, which may be because, while accelerating optimization of neural networks, they do not have any particular mechanism to improve the classification performance of networks.

Among all the compared methods in the experiments on small-samples datasets, Dropout is an implicit ensemble method: in the training process it trains many subnetworks of the original network, and in the test phase no neuron is dropped. On the four small-sample datasets, Dropout do not show remarkable advantages over the network without dropouts.

Lsoftmax introduces a large margin into the cross-entropy softmax loss function to learn more discriminative features. The key problem of this method is that it is not easy to converge. Thus, it does not perform well in some experiments with a reduced training size. In contrast, the proposed OSL method converges easily. 

The SnapShot is a strong baseline among the compared methods and obtains relatively larger mean values and relatively smaller variances. However, although the method is trained only once in the training process and combines the predictions of all SnapShot networks in the test phase, due to the correlation of the base networks, the SnapShot increases the performance of FC slightly. 

The proposed method on small-samples datasets shows higher accuracy and better stability, which are mainly attributed to the OSL, where some connections are removed and the angles among the weights of different classes are maintained as $90^\circ$ from the beginning to the end during training. Having large angles among the weights of different classes in the classification layer is an important precondition to obtain a decision rule with a large margin, when the $L_2$ norms of the weights are not considered. In addition, which class each neuron of the last hidden layer should serve exclusively is determined prior to the start of training, so the difficulty of training can be significantly reduced. The experimental results of changing the width and depth of the network suggest that when a thin and shallow structure is selected, the OSL can obtain a better performance. In addition, experimental results of our OSL on the three large-sample image datasets show that, although developed for small-sample classification, OSL can also be well applied to large-sample classification.

Even though OSL has demonstrated superior performance, it has its own limitations. For example, due to the design of OSL, the number of neurons of the last hidden layer should be larger than the class number. Therefore, when the class number is large,~\emph{e.g.}, $1000$, a large number of neurons will be required, which will increase the flexibility of network. We leave it as an open problem for the future.

\section{Conclusions}\label{Conclusion}
In this paper, we proposed a new classification layer called ~\emph{Orthogonal Softmax Layer} (OSL). A network with OSL (OSLNet) has two advantages,~\emph{i.e.}, easy optimization and low Rademacher complexity. The Rademacher complexity of OSLNet is $\frac{1}{K}$, where $K$ is the number of classes, of that of a network with the fully connected classification layer. Experimental results on four small-sample datasets provide the following observations for an OSLNet in small-sample classification: 1) It is able to obtain higher accuracy with larger mean and smaller variance than those of the baselines used for comparison. 2) It is statistically significantly better than the baselines. 3) It is more suitable for thin and shallow networks than a fully connected network. Further experiments on three large-sample datasets show that, compared with a CNN with the fully connected classification layer, a CNN with the OSL has a competitive performance for both training and test data.

\bibliographystyle{IEEEtran}
\bibliography{main} 

\end{document}